\PassOptionsToPackage{table}{xcolor}

\documentclass{article} 
\usepackage{iclr2027_conference,times}

\usepackage{amsmath,amsfonts,bm}

\def\eqref#1{equation~\ref{#1}}

\def\1{\bm{1}}

\DeclareMathAlphabet{\mathsfit}{\encodingdefault}{\sfdefault}{m}{sl}
\SetMathAlphabet{\mathsfit}{bold}{\encodingdefault}{\sfdefault}{bx}{n}

\usepackage{amsmath}
\usepackage{amssymb}

\usepackage{booktabs}
\usepackage{multirow}
\usepackage{makecell}
\usepackage{adjustbox}
\usepackage{minitoc}
\usepackage{titletoc}

\usepackage{graphicx}
\usepackage{wrapfig}
\usepackage{capt-of}

\usepackage{xcolor}

\definecolor{deltagray}{RGB}{247,247,247}

\definecolor{brickred}{HTML}{A94438}

\usepackage{listings}
\usepackage[most]{tcolorbox}

\usepackage{url}
\usepackage{hyperref}

\hypersetup{
    colorlinks=true,
    citecolor=brickred,
    linkcolor=brickred,
    urlcolor=brickred
}

\newcommand{\tablesize}{\fontsize{8.3}{8.1}\selectfont}

\newcommand{\dsize}{\fontsize{7.5}{6.2}\selectfont}

\definecolor{goodgreen}{RGB}{104,138,119}
\definecolor{badred}{RGB}{181,118,108}
\definecolor{neutralgray}{RGB}{135,135,135}

\newcommand{\dpos}[1]{%
    \textcolor{goodgreen}{\dsize +#1}%
}

\newcommand{\dneg}[1]{%
    \textcolor{badred}{\dsize #1}%
}

\newcommand{\dzero}{%
    \textcolor{neutralgray}{\dsize 0.0}%
}

\newcommand{\dposbf}[1]{%
    \textcolor{goodgreen}{\dsize\textbf{+#1}}%
}

\newcommand{\dnegbf}[1]{%
    \textcolor{badred}{\dsize\textbf{#1}}%
}

\newcommand{\dzerobf}{%
    \textcolor{neutralgray}{\dsize\textbf{0.0}}%
}

\definecolor{goodgreen}{RGB}{104, 138, 119}
\definecolor{badred}{RGB}{181, 118, 108}

\newcommand{\dgood}[1]{%
    \textcolor{goodgreen}{\dsize #1}%
}

\newcommand{\dbad}[1]{%
    \textcolor{badred}{\dsize #1}%
}

\newcommand{\dgoodbf}[1]{%
    \textcolor{goodgreen}{\dsize\textbf{#1}}%
}

\newcommand{\ddash}{%
    \textcolor{gray!60!black}{\dsize --}%
}

\newcommand{\dname}[1]{%
    \textcolor{gray!70!black}{%
        \dsize $\Delta_{\text{#1}}$%
    }%
}

\newcommand{\Ind}[1]{%
    \mathbf{1}\!\left\{#1\right\}%
}

\definecolor{systembg}{RGB}{247,248,250}
\definecolor{systemframe}{RGB}{185,190,200}

\definecolor{userbg}{RGB}{245,249,255}
\definecolor{userframe}{RGB}{150,180,220}

\definecolor{assistantbg}{RGB}{246,251,247}
\definecolor{assistantframe}{RGB}{145,190,155}

\definecolor{toolbg}{RGB}{255,249,241}
\definecolor{toolframe}{RGB}{220,175,115}

\tcbset{
    promptstyle/.style={
        enhanced,
        breakable,
        boxrule=0.6pt,
        arc=1.5mm,
        left=2.5mm,
        right=2.5mm,
        top=2mm,
        bottom=2mm,
        before skip=5pt,
        after skip=5pt,
        fontupper=\small,
        fonttitle=\small\bfseries,
        attach boxed title to top left={
            xshift=2mm,
            yshift=-1.5mm
        },
        boxed title style={
            boxrule=0pt,
            arc=1mm,
            left=1.5mm,
            right=1.5mm,
            top=0.5mm,
            bottom=0.5mm
        }
    }
}

\newtcolorbox{systembox}{
    promptstyle,
    colback=systembg,
    colframe=systemframe,
    colbacktitle=systemframe!25,
    coltitle=black,
    title=System
}

\newtcolorbox{userbox}{
    promptstyle,
    colback=userbg,
    colframe=userframe,
    colbacktitle=userframe!25,
    coltitle=black,
    title=User
}

\newtcolorbox{assistantbox}{
    promptstyle,
    colback=assistantbg,
    colframe=assistantframe,
    colbacktitle=assistantframe!25,
    coltitle=black,
    title=Assistant
}

\newtcolorbox{toolbox}{
    promptstyle,
    colback=toolbg,
    colframe=toolframe,
    colbacktitle=toolframe!25,
    coltitle=black,
    title=Tool
}

\lstdefinestyle{promptcode}{
    basicstyle=\ttfamily\scriptsize,
    breaklines=true,
    breakatwhitespace=false,
    columns=fullflexible,
    keepspaces=true,
    showstringspaces=false,
    frame=none,
    xleftmargin=0pt,
    xrightmargin=0pt,
    aboveskip=2pt,
    belowskip=2pt
}

\usepackage{tabularx}
\newcolumntype{Y}{>{\raggedright\arraybackslash}X}

\usepackage{amsthm}
\usepackage[most]{tcolorbox}

\definecolor{papoprop}{RGB}{153,62,55}

\newtheorem{proposition}{Proposition}

\newtcolorbox{propositionbox}{
    enhanced,
    breakable,
    colback=papoprop!4,
    colframe=papoprop!55,
    boxrule=0.7pt,
    arc=1.5mm,
    left=1.2mm,
    right=1.2mm,
    top=0.8mm,
    bottom=0.8mm,
    before skip=6pt,
    after skip=6pt
}

\title{Priority-Preserving Advantage Aggregation for Multi-Reward Policy Optimization}

\title{Priority-Preserving Advantage Aggregation for Multi-Reward Policy Optimization}

\author{
\begin{tabular}{@{}l@{}}
\textbf{Xiangchi Yuan}$^{1}$,
\textbf{Xingjian Diao}$^{2}$,
\textbf{Zheyuan Liu}$^{3}$,
\textbf{Chunhui Zhang}$^{2}$,
\textbf{Dachuan Shi}$^{1}$, \\[0.15em]
\textbf{Kejing Xia}$^{1}$,
\textbf{Zhenbang Du}$^{1}$,
\textbf{Shenglong Yao}$^{1}$,
\textbf{Soroush Vosoughi}$^{2}$,
\textbf{Wenke Lee}$^{1}$ \\[0.35em]
{\normalfont\small
$^{1}$Georgia Institute of Technology \quad
$^{2}$Dartmouth College \quad
$^{3}$University of Notre Dame}
\end{tabular}
}

\iclrfinalcopy

\iclrfinalcopy

\begin{document}

\maketitle

\begin{abstract}
LLM post-training often optimizes multiple objectives with unequal priorities, such as correctness and reasoning efficiency. However, the widely used additive reward aggregation in RL can violate these priorities: when primary and auxiliary advantages conflict, a sufficiently large auxiliary signal can overturn the preference induced by the primary objective. We term this problem \emph{advantage priority inversion}. Although reward-level gating can protect priority more directly, it still removes potentially useful auxiliary distinctions among rollout samples that fail the primary condition.
To address this trade-off, we propose Primary Advantage Policy
Optimization (\textbf{PAPO}), a priority-aware advantage aggregation
method that asymmetrically combines a primary and an auxiliary advantage.
PAPO retains auxiliary supervision when the objectives agree and smoothly
attenuates conflicting auxiliary signals, preserving the primary sign
during fusion for \emph{primary-consistent ordering}.
This rule applies to sequence-level advantages in GRPO
and token-level advantages in actor--critic PPO. Experiments on multiple domains and models show that PAPO consistently achieves the best primary objective
while maintaining best or competitive auxiliary-objective performance across both GRPO- and
PPO-based optimization. 
\href{https://xiangchi-yuan.github.io/papo-project/}{Code is available}.
\end{abstract}

\section{Introduction}

Reinforcement learning (RL) has become a central component of large language
model (LLM) post-training, substantially improving reasoning and other complex
capabilities~\citep{schulman2017proximal,guo2025deepseek,diao2026errors,yuan2025mitigating,yuan-etal-2026-behavior,shi2026swireasoning,shi2026copt}.
Many post-training tasks naturally involve more than one reward objective.
Existing methods commonly combine such signals through additive reward or
advantage aggregation, sometimes together with independent normalization or
adaptive weighting~\citep{ichihara2025awgrpo,diao2026addressing,liu2026gdpo,jiang2026dvao}.
However, these reward objectives are often not equally important.
In many settings, one primary objective is more important and should determine whether an update is favored,
while another auxiliary objective should only refine how that objective is achieved.
For example, correctness should take priority over reasoning length, and task
success should take priority over tool-use efficiency.

When such an
asymmetric objective relationship is present, existing multi-reward optimization faces three challenges 
\textit{(i) Additive aggregation can invert the primary preference.}
When the primary and auxiliary advantages conflict, a sufficiently large
auxiliary advantage can reverse the sign induced by the primary advantage,
causing the policy to reinforce a primary-disfavored response or suppress a
primary-favored one.
We refer to this problem caused by additive aggregation methods~\citep{guo2025deepseek,aggarwal2025l,jiang2026dvao} as \emph{advantage priority inversion}.
Reducing the auxiliary weight can make such inversions less frequent, but
cannot structurally prevent them when the relative advantage magnitudes vary
across samples.
\textit{(ii) Hard priority mechanisms discard useful supervision.}
Reward-level gating~\citep{liu2026gdpo,liu2026learn,liu2025dler,microsoft2026mai} can prevent the auxiliary objective from contributing when
a primary condition is not satisfied, but does so through a coarse binary
decision and removes potentially informative distinctions among gated
responses.
\textit{(iii) Response-level aggregation cannot resolve token-level conflicts
in PPO.}
Actor--critic PPO estimates advantages at the token level through GAE, and the
primary and auxiliary objectives may agree at some token positions but conflict
at others within the same trajectory.
Popular response-level aggregation rules~\citep{microsoft2026mai,liu2026gdpo,liu2026learn} are designed only for GRPO and can therefore obscure the local conflicts
that directly determine token-level policy updates.

To address these challenges, we propose \textbf{Primary Advantage Policy
Optimization (PAPO)}, a priority-aware advantage aggregation framework with
three corresponding designs.
\textit{(i) PAPO preserves primary priority during advantage fusion.}
When the primary and auxiliary advantages agree, PAPO retains the auxiliary
contribution; when they conflict, PAPO smoothly bounds the auxiliary influence
relative to the primary advantage so that it cannot reverse the primary sign.
This sign-preserving fusion further induces a
\emph{primary-consistent ordering}: optimization units favored by the primary
objective remain ordered above primary-disfavored units.
Although normalization may change the absolute sign of aggregated advantage, it still preserves preference ordering.

\textit{(ii) PAPO retains fine-grained auxiliary supervision rather than discarding
it through hard gating.}
Instead of making a binary decision on whether the auxiliary signal is used,
PAPO continuously attenuates conflicting auxiliary advantages according to
their magnitude relative to the primary advantage.
This allows auxiliary supervision to remain informative while preventing it
from overturning the priority induced by the primary objective.

\textit{(iii) PAPO generalizes to the different policy
optimization methods.}
For GRPO-based optimization~\citep{guo2025deepseek}, PAPO fuses independently
normalized sequence-level primary and auxiliary advantages within each rollout
group.
For actor--critic PPO~\citep{schulman2017proximal}, PAPO instead performs
token-level fusion to resolve local primary--auxiliary conflicts within a
trajectory.
We further introduce a dual-head critic that separately estimates the primary
and auxiliary returns before nonlinear actor-side PAPO aggregation, making sure the accurate critic estimation.

Our contributions are summarized as follows:
\begin{itemize}
    \item We identify \emph{advantage priority inversion}, a problem in
    primary--auxiliary reward optimization where a conflicting auxiliary
    advantage can overturn the preference induced by the primary objective.
    We further highlight the limitations of existing weighting, hard-gating,
    and response-level aggregation strategies for preserving such priority.

    \item We propose \textbf{PAPO}, a priority-aware advantage aggregation
    method that asymmetrically combines primary and auxiliary advantages.
    PAPO preserves the primary sign during fusion while retaining fine-grained
    auxiliary supervision, and induces a \emph{primary-consistent ordering}
    that remains preserved under centered normalization.

    \item We instantiate PAPO at different optimization granularities:
    sequence-level aggregation for GRPO and token-level aggregation for PPO.
    For PPO, we further introduce a dual-head critic that preserves accurate
    objective-specific token-level credit assignment while keeping nonlinear
    PAPO fusion on the actor side.

    \item We evaluate PAPO on multiple domains and models
    across both GRPO- and PPO-based optimization, demonstrating consistent notable
    improvements on the primary objective while maintaining strong performance
    on the auxiliary objective.
\end{itemize}

 \section{Related Works}
\label{sec:related_work}
\textbf{Policy Optimization for LLM Reasoning.}
RL has become an effective paradigm for improving the reasoning capabilities of LLMs.
PPO~\citep{schulman2017proximal} and its variant VAPO~\citep{yue2025vapo} has been widely used for language-model post-training, while GRPO~\citep{guo2025deepseek} and its variants~\citep{zheng2025gspo,ma2026fipo} eliminate the learned critic by estimating relative advantages from groups of rollouts.
Recent method DAPO~\citep{yu2026dapo} further improves GRPO by introducing asymmetric clipping and dynamic sampling for large-scale training.

\textbf{Multi-Reward Reinforcement Learning.}
Multi-reward RL has been increasingly used to optimize multiple dimensions of LLM behavior.
Safe RLHF~\citep{dai2023saferlhf} separates helpfulness and safety into reward and cost objectives under constrained optimization. \cite{liu2026learn,liu2026gdpo,liu2025dler} condition the length reward on the accuracy reward to increase the efficiency for reasoning.
More recently, AW-GRPO~\citep{ichihara2025awgrpo} and DVAO~\citep{jiang2026dvao} dynamically adjust reward weights according to the learning progress of individual objectives, and~\cite{wang2026smopd} further studies imbalanced optimization among heterogeneous reward granularities.

\section{Preliminaries}

\textbf{Proximal Policy Optimization.}
For a token action $a_t$ at state $s_t$, PPO defines the importance ratio
$\rho_t(\theta)=
\frac{\pi_\theta(a_t\mid s_t)}
{\pi_{\theta_{\mathrm{old}}}(a_t\mid s_t)}$
and optimizes the clipped surrogate objective
\begin{equation}
L^{\mathrm{PPO}}(\theta)
=
\mathbb{E}_t
\left[
\min
\left(
\rho_t(\theta) A_t,
\mathrm{clip}(\rho_t(\theta),1-\epsilon_p,1+\epsilon_p)A_t
\right)
\right].
\end{equation}
In actor--critic PPO, the token-level advantage $A_t$ is commonly estimated
using Generalized Advantage Estimation (GAE). Given a value function $V_\phi$,
the TD error is
$\delta_t=r_t+\gamma V_\phi(s_{t+1})-V_\phi(s_t)$,
and the corresponding advantage is
$A_t=\sum_{l=0}^{T-t}(\gamma\lambda)^l\delta_{t+l}$.
The value function is trained toward the return target
$R_t=A_t+V_{\mathrm{old}}(s_t)$, typically using a clipped value loss for
stability.

\textbf{Group Relative Policy Optimization.}
GRPO is a sequence-level policy optimization method for response-level reward
feedback. Given a prompt $x$, GRPO samples a group of responses
$\{y_i\}_{i=1}^{G}$ from the old policy and obtains a reward $R_i$ for each
response. It estimates the relative advantage by normalizing rewards within the
group:
$A_i=
\frac{R_i-\mu_{\mathrm{group}}}
{\sigma_{\mathrm{group}}+\epsilon}$,
where $\mu_{\mathrm{group}}$ and $\sigma_{\mathrm{group}}$ denote the group
mean and standard deviation. Define $\rho_{i,t}(\theta)=
\frac{\pi_\theta(a_{i,t}\mid s_{i,t})}
{\pi_{\theta_{\mathrm{old}}}(a_{i,t}\mid s_{i,t})}$, the resulting response-level advantage is then
used in a PPO-style clipped objective:
\begin{equation}
L^{\mathrm{GRPO}}(\theta)
=
\mathbb{E}_{i,t}
\left[
\min
\left(
\rho_{i,t}(\theta) A_i,
\mathrm{clip}(\rho_{i,t}(\theta),1-\epsilon_p,1+\epsilon_p)A_i
\right)
\right].
\end{equation}

\section{Revisiting Multi-Reward LLM: Advantage Priority Inversion}
\label{sec:motivation}


\begin{figure}[t]
    \centering
    \vspace{-5mm}
    \includegraphics[width=0.75\linewidth]{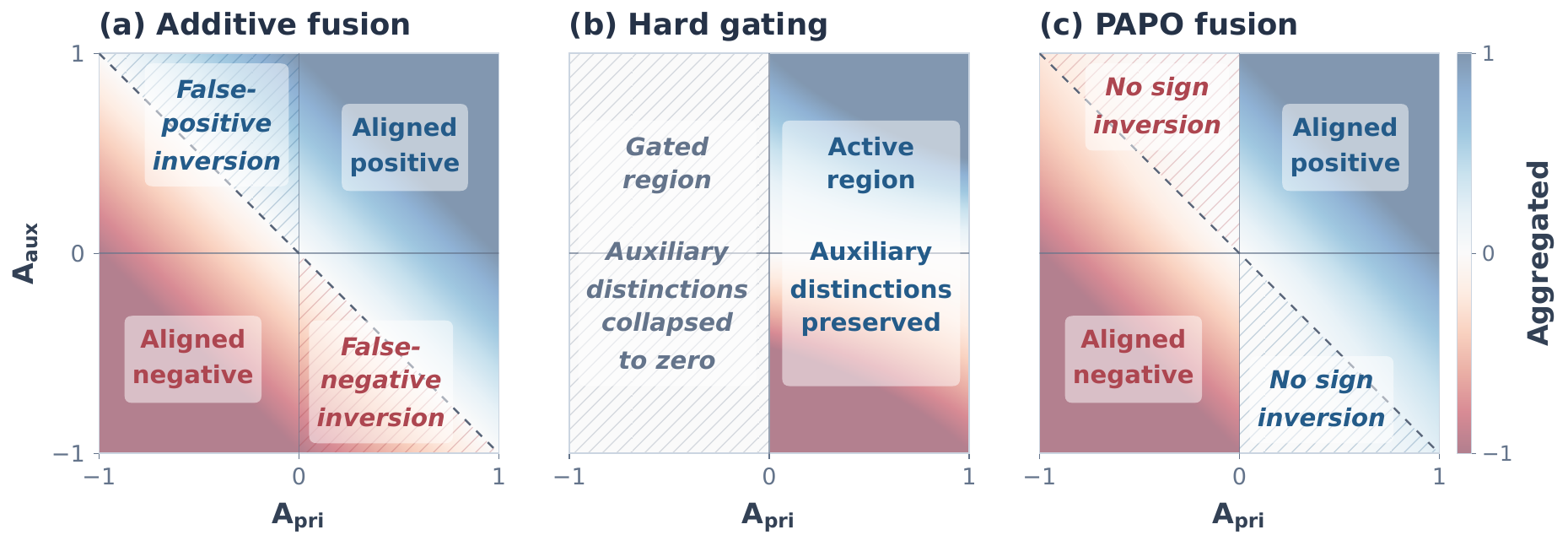}
    \vspace{-5mm}
\caption{
\textbf{Advantage aggregation.}
\textbf{(a)} Additive aggregation can cause \emph{advantage priority inversion},
where a dominant auxiliary signal reverses the sign induced by the primary advantage.
\textbf{(b)} Hard gating enforces priority coarsely by discarding auxiliary
distinctions for unsatisfied primary condition.
\textbf{(c)} PAPO preserves the primary sign during fusion while retaining
graded auxiliary supervision, maintaining the primary-induced
preference structure.
}
    \vspace{-2mm}
    \label{fig:motivation}
\end{figure}

Popular advantage aggregation methods mentioned in Section~\ref{sec:related_work} includes additive aggregation and gating, while these kind of methods have limitations on advantage priority inversion and losing informative auxiliary distinctions as illustrated in Figure~\ref{fig:motivation}, which motivates our PAPO design.

\textbf{Additive aggregations invert primary preference.}
The common additive approaches~\citep{guo2025deepseek,aggarwal2025l} sum primary advantage $A_{i,\mathrm{pri}}$ and auxiliary advantage $A_{i,\mathrm{aux}}$ for response $i$ in this form
:
$A_i^{\mathrm{sum}}
=
A_{i,\mathrm{pri}}
+
\alpha A_{i,\mathrm{aux}}$,
where $\alpha>0$ controls the auxiliary contribution.
As illustrated in Figure~\ref{fig:motivation}(a), when the two advantages
conflict and the auxiliary term dominates (i.e.,
$A_{i,\mathrm{pri}}A_{i,\mathrm{aux}}<0$ and
$\alpha|A_{i,\mathrm{aux}}|>|A_{i,\mathrm{pri}}|$),
the aggregated advantage satisfies
$\operatorname{sign}(A_i^{\mathrm{sum}})
\neq
\operatorname{sign}(A_{i,\mathrm{pri}})$.
We refer to this as \emph{advantage priority inversion}: such an inversion allows the auxiliary objective to overturn the preference
induced by the primary objective during aggregation. For example, a response
with $A_{i,\mathrm{pri}}<0$ can receive a positive aggregated advantage due to
a sufficiently large positive auxiliary signal, while a response with
$A_{i,\mathrm{pri}}>0$ can be suppressed by a sufficiently negative one.
Adjusting the auxiliary weight can reduce the frequency of such conflicts
\citep{liu2026gdpo,jiang2026dvao}, but does not structurally prevent it.

\textbf{Existing priority mechanisms remain coarse.}
Another strategy introduces priority through reward-level gating
\citep{liu2026gdpo,liu2026learn,liu2025dler,microsoft2026mai}:
$R_{i,\mathrm{aux}}
\gets
R_{i,\mathrm{aux}}
\Ind{R_{i,\mathrm{pri}}\ge\tau}$, where $\Ind{\cdot}$ is indicator function and $r$ is gating value.
Such gating prevents the auxiliary reward from contributing when the primary
condition is not satisfied by making a binary decision. However, as illustrated in Figure~\ref{fig:motivation}(b), responses that fail the
primary condition receive no auxiliary distinction even when their auxiliary
rewards can provide useful preference information.
This motivates a softer mechanism that retains auxiliary distinctions while
preventing advantage priority inversion.

\textbf{Sequence-level aggregation cannot resolve token-level conflicts.}
Advantage priority inversion becomes more pronounced in actor--critic PPO, where GAE provides a distinct advantage at each token.
The primary and auxiliary objectives may agree at some token positions but
conflict at others within the same sequence. Consequently, agreement between
sequence-level signals does not imply local agreement between token-level
advantages ($\hat{A}_t^{\mathrm{pri}}
\hat{A}_t^{\mathrm{aux}}
\ge 0$) for every token $t$, even when the two objectives appear to agree at the sequence level.
Previous sequence-level aggregation methods mentioned in Section~\ref{sec:related_work} therefore hide local conflicts with advantage priority inversion that
directly determine token-level policy updates, and can not be adapted to PPO.


\section{Method}
\label{sec:method}

In light of the limitations analyzed in
Section~\ref{sec:motivation}, we propose \textbf{Primary Advantage Policy
Optimization (PAPO)}, illustrated in Figure~\ref{fig:papo_method}.
PAPO performs priority-aware fusion directly in the advantage space: the
primary advantage determines the preferred update direction, while the
auxiliary advantage provides supplementary supervision without overriding this
priority. This design preserves the primary sign during advantage fusion and,
under standard centered normalization, maintains the induced
\emph{primary-consistent ordering}.
Because PAPO operates at the advantage level, the same principle can be
applied to different policy-optimization frameworks. We develop a
sequence-level variant for GRPO and a token-level variant for PPO, where a
multi-head critic provides accurate objective-specific credit assignment before the
advantages are aggregated by PAPO.

\subsection{Primary Advantage Policy Optimization (PAPO)}
\label{sec:papo}

Let $A_{\mathrm{pri}}$ and  $A_{\mathrm{aux}}$ denote the primary and auxiliary reward objectives, and PAPO combines them using a directional asymmetric
squashing rule:
\begin{equation}
A_{\mathrm{agg}}
=
A_{\mathrm{pri}}
+
\Ind{A_{\mathrm{pri}} A_{\mathrm{aux}} \ge 0}
\alpha A_{\mathrm{aux}}
+
\Ind{A_{\mathrm{pri}} A_{\mathrm{aux}} < 0}
A_{\mathrm{pri}}
\tanh\left(
\frac{
\alpha A_{\mathrm{aux}}
}{
A_{\mathrm{pri}}
+
\epsilon\,\operatorname{sgn}(A_{\mathrm{pri}})
}
\right),
\label{eq:papo}
\end{equation}
where $\alpha$ controls the strength of auxiliary supervision,
$\operatorname{sgn}(\cdot)$ denotes the sign function,
$\Ind{\cdot}$ is the indicator function, and $\epsilon$ is a small constant
for numerical stability. The aggregation depends on whether the primary and auxiliary objectives agree.
When $A_{\mathrm{pri}}$ and $A_{\mathrm{aux}}$ have the same sign, PAPO
retains the auxiliary signal because both objectives favor the same update
direction. When their signs conflict, Eq.~\ref{eq:papo} can be written as $A_{\mathrm{agg}}
=
A_{\mathrm{pri}}
\left[
1-
\tanh\left(
\frac{
\alpha |A_{\mathrm{aux}}|
}{
|A_{\mathrm{pri}}|+\epsilon
}
\right)
\right].$
The multiplicative factor is strictly positive and at most one. Thus, a
conflicting auxiliary signal can smoothly attenuate the primary advantage
toward zero, but cannot reverse its sign. The $\tanh$ transformation further
provides a smooth transition from approximately linear aggregation under weak
conflict to saturation under strong conflict. Moreover, this sign preservation is the key mechanism by which PAPO enforces objective
priority. In particular, \textit{it induces an \textbf{ordering} between optimization
units favored and disfavored by the primary objective.} This ordering preserves the coarse preference induced by the primary objective:
auxiliary supervision may refine preferences within each side of the primary
decision boundary, but cannot promote a primary-disfavored unit above a
primary-favored one. The following
Proposition~\ref{prop:papo_priority_main} shows that this primary-consistent ordering is preserved even
after the centered normalization used in GRPO and PPO.

\begin{propositionbox}
\begin{proposition}
\label{prop:papo_priority_main}
Let an optimization unit denote a rollout in GRPO or a token position in PPO;
the associated normalization set is a rollout group for GRPO and a token-level
mini-batch for PPO.
For any unit $u$ with $A_{u,\mathrm{pri}}\neq 0$, PAPO preserves the primary
sign during aggregation:
$\operatorname{sign}(A_{u,\mathrm{agg}})
=
\operatorname{sign}(A_{u,\mathrm{pri}})$.
Moreover, for any units $u$ and $v$ in the same normalization,
centered normalization
$\hat{A}_k=(A_{k,\mathrm{agg}}-\mu)/(\sigma+\epsilon)$
preserves their primary-consistent ordering:
\[
A_{u,\mathrm{pri}}>0>A_{v,\mathrm{pri}}
\quad\Longrightarrow\quad
\hat{A}_u>\hat{A}_v.
\]
\end{proposition}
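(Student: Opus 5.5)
The argument has two parts: sign preservation, obtained by case analysis on Eq.~\ref{eq:papo}, and then a monotonicity argument for the centered normalization.

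\emph{Sign preservation.} Fix a unit $u$ with $A_{u,\mathrm{pri}}\neq 0$ and split on the two indicators.

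In the agreement case $A_{u,\mathrm{pri}}A_{u,\mathrm{aux}}\ge 0$, we have $A_{u,\mathrm{agg}}=A_{u,\mathrm{pri}}+\alpha A_{u,\mathrm{aux}}$. Here $\alpha>0$ and $A_{u,\mathrm{aux}}$ is either zero or has the same sign as $A_{u,\mathrm{pri}}$. Hence $|A_{u,\mathrm{agg}}|\ge|A_{u,\mathrm{pri}}|>0$, and the sign matches that of $A_{u,\mathrm{pri}}$.

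In the conflict case $A_{u,\mathrm{pri}}A_{u,\mathrm{aux}}<0$, write $s=\operatorname{sgn}(A_{u,\mathrm{pri}})$, so that $A_{u,\mathrm{pri}}+\epsilon s=s(|A_{u,\mathrm{pri}}|+\epsilon)$ and $A_{u,\mathrm{aux}}=-s|A_{u,\mathrm{aux}}|$. The argument of $\tanh$ is then
\[
-\frac{\alpha|A_{u,\mathrm{aux}}|}{|A_{u,\mathrm{pri}}|+\epsilon}.
\]
Since $\tanh$ is odd, this recovers the factorized form
\[
A_{u,\mathrm{agg}}=A_{u,\mathrm{pri}}\bigl[1-\tanh(c)\bigr],\qquad c=\frac{\alpha|A_{u,\mathrm{aux}}|}{|A_{u,\mathrm{pri}}|+\epsilon}.
\]
Because $c$ is finite, $\tanh(c)<1$, so the bracket lies in $(0,1]$ and the sign is preserved. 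This is the only place where strictness matters, and it follows from $\epsilon$ keeping the denominator positive.

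\emph{Ordering.} Take $u,v$ in the same normalization set with $A_{u,\mathrm{pri}}>0>A_{v,\mathrm{pri}}$. By the first part, $A_{u,\mathrm{agg}}>0>A_{v,\mathrm{agg}}$, so $A_{u,\mathrm{agg}}>A_{v,\mathrm{agg}}$. Both units share the same $\mu$ and $\sigma$, since these are computed once over the set. The map $x\mapsto (x-\mu)/(\sigma+\epsilon)$ is strictly increasing because $\sigma\ge 0$ and $\epsilon>0$. Therefore $\hat A_u>\hat A_v$.

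\emph{Main obstacle.} The main subtlety is not mathematical depth but making the hypotheses explicit. The normalization denominator $\sigma+\epsilon$ must be positive, which requires assuming $\epsilon>0$. The units must share $\mu$ and $\sigma$, which is exactly what "same normalization set" guarantees. I will also note that the absolute sign of $\hat A_u$ can change under centering, but only the strict ordering is claimed.
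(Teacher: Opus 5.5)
Your proof is correct and follows the paper's argument step for step. You split on the indicator to get sign preservation, with the conflict case reduced to $A_{u,\mathrm{pri}}$ times a factor $1-\tanh(c)\in(0,1]$, and then use that centered normalization is a strictly increasing affine map shared by $u$ and $v$. The one addition is that you derive the factorized conflict form explicitly via $\operatorname{sgn}(A_{u,\mathrm{pri}})$ and the oddness of $\tanh$, whereas the paper only asserts it.
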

\end{propositionbox}

\textit{Proof sketch.}
For conflicting objectives, Eq.~\ref{eq:papo} expresses
$A_{\mathrm{agg}}$ as $A_{\mathrm{pri}}$ multiplied by
$1-\tanh(\alpha|A_{\mathrm{aux}}|/(|A_{\mathrm{pri}}|+\epsilon))$,
which is strictly positive. Hence, PAPO preserves the primary sign during
aggregation, so
$A_{u,\mathrm{pri}}>0>A_{v,\mathrm{pri}}$ implies
$A_{u,\mathrm{agg}}>0>A_{v,\mathrm{agg}}$.
Centered normalization is a strictly increasing affine transformation because
$\sigma+\epsilon>0$, and therefore preserves this ordering.
Full proofs are provided in Appendix~\ref{app:normalization_priority}.
We next instantiate this aggregation rule in GRPO and PPO, with their corresponding advantage estimation and normalization procedures.

\subsection{GRPO-based PAPO: Sequence-Level Advantage Aggregation}
\label{sec:papo_grpo}

For GRPO-based methods, PAPO operates on sequence-level advantages computed
for each rollout. Given the primary and auxiliary rewards of the $i$-th rollout, we first
form their corresponding raw advantages,
$A_{i,\mathrm{pri}}$ and $A_{i,\mathrm{aux}}$.
Because the two reward signals may have different scales, we normalize them
independently within each rollout group:
$A_{i,k}
=
\frac{
A_{i,k}-\mu_{\mathrm{group}}^{k}
}{
\sigma_{\mathrm{group}}^{k}+\epsilon
},
k\in\{\mathrm{pri},\mathrm{aux}\}.$ The normalized advantages are then fused using Eq.~\ref{eq:papo} to obtain
$A_i^{\mathrm{agg}}$. To retain the conventional zero-mean advantage baseline,
we further apply centered normalization within the same rollout group:
$\hat{A}_i
=
\frac{
A_i^{\mathrm{agg}}
-
\mu_{\mathrm{group}}^{\mathrm{agg}}
}{
\sigma_{\mathrm{group}}^{\mathrm{agg}}
+\epsilon
}.$
As established in Proposition~\ref{prop:papo_priority_main}, PAPO preserves
the primary sign during fusion, while the subsequent centered normalization
preserves the resulting \emph{primary-consistent ordering}. Therefore, although
the absolute sign of an individual normalized advantage may change after
centering, a response disfavored by the primary objective cannot be ranked
above a primary-favored response due to the auxiliary objective.

\begin{figure*}[t]
    \centering
    \includegraphics[width=\textwidth]{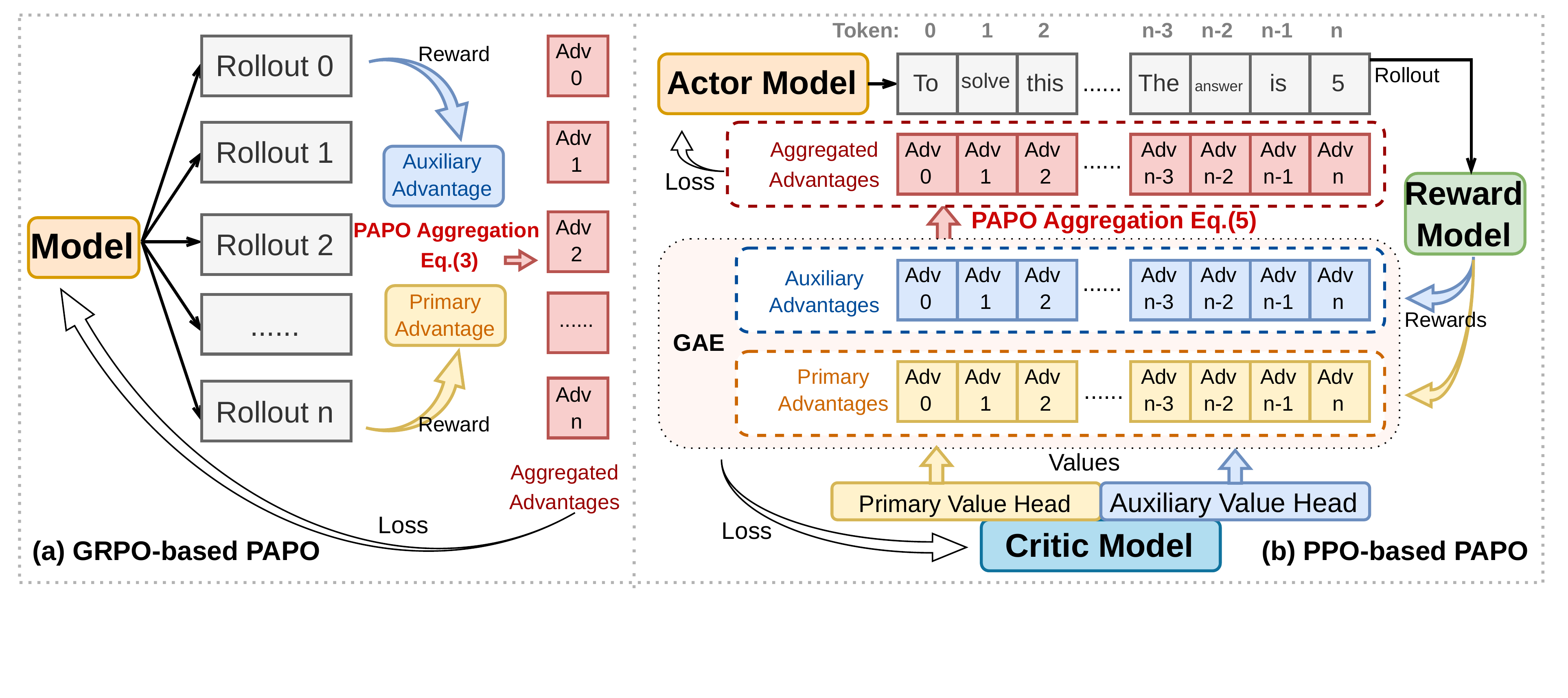}
    \vspace{-1.6cm}
    \caption{
    \textbf{Overview.}
    \textbf{(a) GRPO-based PAPO.}
    Given multiple rollouts, primary and auxiliary rewards are converted
    to sequence-level advantages and combined.
    \textbf{(b) PPO-based PAPO.}
    A multi-head critic estimates objective-specific values for the primary
    and auxiliary objectives. Token-level advantages are then computed using
    GAE and fused by PAPO before optimizing the actor.
    }
    \vspace{-5mm}
    \label{fig:papo_method}
\end{figure*}

\subsection{PPO-based PAPO: Token-Level Advantage Aggregation}
\label{sec:papo_ppo}

\paragraph{Token-Level Advantage with PAPO.}

Actor--critic PPO estimates a distinct advantage at each token through GAE.
This token-level granularity is important for primary--auxiliary optimization:
even within the same trajectory, the primary and auxiliary objectives may agree
at some token positions but conflict at others. A response-level aggregation
can therefore obscure local conflicts that directly determine the policy
update. Therefore for PPO, we apply PAPO at the token level and estimate the two
objectives separately before fusion. Specifically, we maintain two value heads,
$V_\phi^{\mathrm{pri}}(s_t)$ and $V_\phi^{\mathrm{aux}}(s_t)$,
for the primary and auxiliary objectives, respectively. For
$k\in\{\mathrm{pri},\mathrm{aux}\}$, we independently compute the TD error
and GAE:
\begin{equation}
\delta_t^{k}
=
r_t^{k}
+
\gamma V_\phi^{k}(s_{t+1})
-
V_\phi^{k}(s_t),
\qquad
\hat{A}_t^{k}
=
\sum_{l=0}^{T-t}
(\gamma\lambda)^l
\delta_{t+l}^{k}.
\label{eq:papo_gae}
\end{equation}
Computing the two advantages independently preserves objective-specific
temporal credit assignment before PAPO fusion. We then apply PAPO directly to
$\hat{A}_t^{\mathrm{pri}}$ and $\hat{A}_t^{\mathrm{aux}}$ at every token:
\begin{equation}
\hat{A}_t^{\mathrm{agg}}
=
\hat{A}_t^{\mathrm{pri}}
+
\Ind{
\hat{A}_t^{\mathrm{pri}}
\hat{A}_t^{\mathrm{aux}}
\ge 0
}
\alpha \hat{A}_t^{\mathrm{aux}}
+
\Ind{
\hat{A}_t^{\mathrm{pri}}
\hat{A}_t^{\mathrm{aux}}
<0
}
\hat{A}_t^{\mathrm{pri}}
\tanh\left(
\frac{
\alpha\hat{A}_t^{\mathrm{aux}}
}{
\hat{A}_t^{\mathrm{pri}}
+
\epsilon\,\operatorname{sgn}(
\hat{A}_t^{\mathrm{pri}}
)
}
\right).
\label{eq:papo_ppo}
\end{equation}

Then the fused token-level advantages are centered and
normalized within the optimization mini-batch:
$\hat{A}_t
=
\frac{
\hat{A}_t^{\mathrm{agg}}
-
\mu_{\mathrm{batch}}^{\mathrm{agg}}
}{
\sigma_{\mathrm{batch}}^{\mathrm{agg}}
+
\epsilon
}.$
As established in Proposition~\ref{prop:papo_priority_main}, PAPO preserves
the primary sign at each token during fusion, while the subsequent centered
normalization preserves the resulting \emph{primary-consistent ordering}
within the normalization set. Thus, although centering may change the absolute
sign of an individual token advantage, an auxiliary signal cannot rank a
primary-disfavored token position above a primary-favored one. This allows
PAPO to modulate local conflicts with fine-grained auxiliary supervision while
retaining the priority structure induced by the primary objective.

\paragraph{Dual-Head Critic Training.}

If we directly train a critic according to the fused advantage in
Eq.~\ref{eq:papo_ppo} like actor, it would
couple value estimation to the PAPO aggregation rule and collapse information from the primary and auxiliary objectives into a single target, because the fusion is generally many-to-one. Instead, we design two critic heads to independently
estimate the returns of the primary and auxiliary objectives, while PAPO
determines how their resulting advantages interact only during the actor
update.
For each objective $k\in\{\mathrm{pri},\mathrm{aux}\}$, we construct its
return target from the corresponding GAE estimate:
\begin{equation}
R_t^{k}
=
\hat{A}_t^{k}
+
V_{\mathrm{old}}^{k}(s_t),
\qquad
k\in\{\mathrm{pri},\mathrm{aux}\}.
\label{eq:papo_return}
\end{equation}
Each value head is therefore trained on its own unfused target, preserving the
semantics and token-level credit assignment of the corresponding reward
objective independently of the actor-side PAPO transformation.
Each head is optimized using the standard clipped value loss:
\begin{equation}
L_{k}^{\mathrm{VF}}(\phi)
=
\frac{1}{B}
\sum_{\mathrm{batch}}
\max
\left[
\left(
V_\phi^{k}(s_t)-R_t^{k}
\right)^2,
\left(
\mathrm{clip}
\left(
V_\phi^{k}(s_t),
V_{\mathrm{old}}^{k}(s_t)-\epsilon_c,
V_{\mathrm{old}}^{k}(s_t)+\epsilon_c
\right)
-
R_t^{k}
\right)^2
\right],
\label{eq:papo_value_loss}
\end{equation}
where $\epsilon_c$ denotes clipping range. The overall critic
objective is
$L^{\mathrm{VF}}(\phi)
=
L_{\mathrm{pri}}^{\mathrm{VF}}(\phi)
+
L_{\mathrm{aux}}^{\mathrm{VF}}(\phi).$

Therefore, this yields a clear division of responsibility: the dual-head critic provides
separate token-level credit assignment for the primary and auxiliary
objectives, whereas PAPO advantage aggregation performs priority-aware nonlinear fusion only at the
actor update. Consequently, value estimation remains objective-specific and
independent of the PAPO aggregation rule.

\section{Experiments}

We design our experiments around four research questions:
whether PAPO mainly improves the primary objective while having competitive auxiliary
behavior (\textbf{RQ1}), how priority-aware aggregation changes the optimization
dynamics of competing objectives (\textbf{RQ2}), which design components are responsible for the improvement
(\textbf{RQ3}), and whether the resulting gains generalize across models, scales, reward semantics, and domains (\textbf{RQ4}).
To answer \textbf{RQ1}, we evaluate PAPO on two representative multi-reward settings, competitional math reasoning and tool use.

\textbf{Model and Baselines.} We mainly include the \textbf{Base} model Qwen3-8B \citep{yang2025qwen3}and standard \textbf{PPO}~\citep{schulman2017proximal} and
\textbf{GRPO}~\citep{guo2025deepseek} with additive multi-reward aggregation, covering token- and
sequence-level policy optimization. We additionally compare with the representative gating method \textbf{GDPO}~\citep{liu2026gdpo},
which separately normalizes rewards. Our PAPO shares the same settings as the corresponding PPO or GRPO baseline. We also include additional models and baselines in Section~\ref{sec:additional_models} and Appendix~\ref{app:exp_results} to evaluate the effectiveness of PAPO.

\begin{table}[t]
\centering
\vspace{-3mm}
\caption{
\textbf{Mathematical reasoning results.}
$\Delta$ rows report changes relative to the Base model.
Higher is better for both accuracy (Acc.$\uparrow$) and length compliance (Len.$\uparrow$).
The best result within each optimization family is in \textbf{bold}.
}
\vspace{1mm}
\label{tab:main_results_transposed}

\tablesize
\setlength{\tabcolsep}{6.0pt}
\renewcommand{\arraystretch}{1.22}

\begin{adjustbox}{max width=\columnwidth,center}
\begin{tabular}{@{}lcc|cccc|cccccc@{}}

\toprule

&
\multicolumn{2}{c|}{\textbf{Base}}
& \multicolumn{4}{c|}{\textbf{PPO-based}}
& \multicolumn{6}{c}{\textbf{GRPO-based}}
\\

\cmidrule(lr){2-3}
\cmidrule(lr){4-7}
\cmidrule(lr){8-13}

\textbf{Benchmark}
& \multicolumn{2}{c|}{\textbf{Qwen3-8B}}
& \multicolumn{2}{c}{\textbf{PPO}}
& \multicolumn{2}{c|}{\textbf{PAPO}}
& \multicolumn{2}{c}{\textbf{GRPO}}
& \multicolumn{2}{c}{\textbf{GDPO}}
& \multicolumn{2}{c}{\textbf{PAPO}}
\\

\cmidrule(lr){2-3}
\cmidrule(lr){4-5}
\cmidrule(lr){6-7}
\cmidrule(lr){8-9}
\cmidrule(lr){10-11}
\cmidrule(lr){12-13}

&
Acc. & Len.
& Acc. & Len.
& Acc. & Len.
& Acc. & Len.
& Acc. & Len.
& Acc. & Len.
\\

\midrule

\textbf{Olympiad}
& 38.8 & 51.6
& 35.9 & 77.4
& \textbf{42.9} & \textbf{87.2}
& 38.9 & 75.7
& 39.6 & 74.1
& \textbf{43.2} & \textbf{90.2}
\\

\rowcolor{deltagray}
\dname{Olympiad}
& \ddash & \ddash
& \dneg{-2.9} & \dpos{25.8}
& \dposbf{4.1} & \dposbf{35.6}
& \dpos{0.1} & \dpos{24.1}
& \dpos{0.8} & \dpos{22.5}
& \dposbf{4.4} & \dposbf{38.6}
\\

\addlinespace[1.5pt]

\textbf{Minerva}
& 16.7 & 95.8
& 16.7 & \textbf{97.9}
& \textbf{18.8} & \textbf{97.9}
& 16.7 & 97.9
& 18.8 & 97.9
& \textbf{20.8} & \textbf{100.0}
\\

\rowcolor{deltagray}
\dname{Minerva}
& \ddash & \ddash
& \dzero & \dposbf{2.1}
& \dposbf{2.1} & \dposbf{2.1}
& \dzero & \dpos{2.1}
& \dpos{2.1} & \dpos{2.1}
& \dposbf{4.1} & \dposbf{4.2}
\\

\addlinespace[1.5pt]

\textbf{AMC}
& 82.5 & 60.0
& 77.5 & \textbf{92.5}
& \textbf{95.0} & 85.0
& 82.5 & \textbf{97.5}
& 85.0 & 87.5
& \textbf{92.5} & 87.5
\\

\rowcolor{deltagray}
\dname{AMC}
& \ddash & \ddash
& \dneg{-5.0} & \dposbf{32.5}
& \dposbf{12.5} & \dpos{25.0}
& \dzero & \dposbf{37.5}
& \dpos{2.5} & \dpos{27.5}
& \dposbf{10.0} & \dpos{27.5}
\\

\addlinespace[1.5pt]

\textbf{MATH}
& 85.4 & 83.0
& 85.0 & \textbf{97.0}
& \textbf{89.6} & 94.8
& 84.4 & 92.8
& 85.2 & 92.6
& \textbf{89.8} & \textbf{97.8}
\\

\rowcolor{deltagray}
\dname{MATH}
& \ddash & \ddash
& \dneg{-0.4} & \dposbf{14.0}
& \dposbf{4.2} & \dpos{11.8}
& \dneg{-1.0} & \dpos{9.8}
& \dneg{-0.2} & \dpos{9.6}
& \dposbf{4.4} & \dposbf{14.8}
\\

\addlinespace[1.5pt]

\textbf{AIME24}
& 56.7 & 33.3
& 23.3 & 43.3
& \textbf{53.3} & \textbf{70.0}
& 46.7 & 63.3
& 43.3 & 60.0
& \textbf{70.0} & \textbf{76.7}
\\

\rowcolor{deltagray}
\dname{AIME24}
& \ddash & \ddash
& \dneg{-33.4} & \dpos{10.0}
& \dnegbf{-3.4} & \dposbf{36.7}
& \dneg{-10.0} & \dpos{30.0}
& \dneg{-13.4} & \dpos{26.7}
& \dposbf{13.3} & \dposbf{43.4}
\\

\addlinespace[1.5pt]

\textbf{AIME25}
& 40.0 & 16.7
& 23.3 & \textbf{53.3}
& \textbf{40.0} & 46.7
& 36.7 & 60.0
& 33.3 & 53.3
& \textbf{43.3} & \textbf{73.3}
\\

\rowcolor{deltagray}
\dname{AIME25}
& \ddash & \ddash
& \dneg{-16.7} & \dposbf{36.6}
& \dzerobf & \dpos{30.0}
& \dneg{-3.3} & \dpos{43.3}
& \dneg{-6.7} & \dpos{36.6}
& \dposbf{3.3} & \dposbf{56.6}
\\

\midrule



\textbf{Average}
& 53.4 & 56.7
& 43.6 & 76.9
& \textbf{56.6} & \textbf{80.3}
& 51.0 & 81.2
& 50.9 & 77.6
& \textbf{59.9} & \textbf{87.6}
\\

\rowcolor{deltagray}
\dname{Average}
& \ddash & \ddash
& \dneg{-9.8} & \dpos{20.2}
& \dposbf{3.2} & \dposbf{23.6}
& \dneg{-2.4} & \dpos{24.5}
& \dneg{-2.5} & \dpos{20.9}
& \dposbf{6.5} & \dposbf{30.9}
\\

\bottomrule

\end{tabular}
\end{adjustbox}

\vspace{-5mm}
\end{table}

\begin{table}[t]
\centering
\caption{
\textbf{Agentic tool-use results on} \textsc{When2Tool}~\citep{sun2026when2tool}.
$\Delta$ rows report changes relative to the Base model.
Acc.$\uparrow$ denotes accuracy and Calls$\downarrow$ denotes the average number of tool calls for a query.
The best result within each optimization family is in \textbf{bold}.
}
\vspace{1mm}
\label{tab:tool_use_results}

\tablesize
\setlength{\tabcolsep}{6.5pt}
\renewcommand{\arraystretch}{1.22}

\begin{adjustbox}{max width=\columnwidth,center}
\begin{tabular}{@{}lcc|cccc|cccccc@{}}

\toprule

&
\multicolumn{2}{c|}{\textbf{Base}}
& \multicolumn{4}{c|}{\textbf{PPO-based}}
& \multicolumn{6}{c}{\textbf{GRPO-based}}
\\

\cmidrule(lr){2-3}
\cmidrule(lr){4-7}
\cmidrule(lr){8-13}

\textbf{Difficulty}
& \multicolumn{2}{c|}{\textbf{Qwen3-8B}}
& \multicolumn{2}{c}{\textbf{PPO}}
& \multicolumn{2}{c|}{\textbf{PAPO}}
& \multicolumn{2}{c}{\textbf{GRPO}}
& \multicolumn{2}{c}{\textbf{GDPO}}
& \multicolumn{2}{c}{\textbf{PAPO}}
\\

\cmidrule(lr){2-3}
\cmidrule(lr){4-5}
\cmidrule(lr){6-7}
\cmidrule(lr){8-9}
\cmidrule(lr){10-11}
\cmidrule(lr){12-13}

&
Acc. & Calls
& Acc. & Calls
& Acc. & Calls
& Acc. & Calls
& Acc. & Calls
& Acc. & Calls
\\

\midrule

\textbf{Easy}
& 88.11 & 1.33
& 93.78 & \textbf{0.15}
& \textbf{94.56} & 0.22
& 93.89 & 0.13
& 93.89 & \textbf{0.08}
& \textbf{95.67} & 0.12
\\

\rowcolor{deltagray}
\dname{Easy}
& \ddash & \ddash
& \dgood{+5.67} & \dgoodbf{-1.18}
& \dgoodbf{+6.45} & \dgood{-1.11}
& \dgood{+5.78} & \dgood{-1.20}
& \dgood{+5.78} & \dgoodbf{-1.25}
& \dgoodbf{+7.56} & \dgood{-1.21}
\\

\addlinespace[1.5pt]

\textbf{Medium}
& 88.89 & 1.41
& 87.22 & \textbf{0.23}
& \textbf{92.44} & 0.49
& 87.00 & 0.16
& \textbf{89.77} & \textbf{0.14}
& 89.22 & 0.15
\\

\rowcolor{deltagray}
\dname{Medium}
& \ddash & \ddash
& \dbad{-1.67} & \dgoodbf{-1.18}
& \dgoodbf{+3.55} & \dgood{-0.92}
& \dbad{-1.89} & \dgood{-1.25}
& \dgoodbf{+0.88} & \dgoodbf{-1.27}
& \dgood{+0.33} & \dgood{-1.26}
\\

\addlinespace[1.5pt]

\textbf{Hard}
& 75.67 & 1.47
& 77.78 & \textbf{0.73}
& \textbf{79.56} & 0.96
& 82.22 & 0.69
& 76.89 & \textbf{0.52}
& \textbf{84.89} & 0.71
\\

\rowcolor{deltagray}
\dname{Hard}
& \ddash & \ddash
& \dgood{+2.11} & \dgoodbf{-0.74}
& \dgoodbf{+3.89} & \dgood{-0.51}
& \dgood{+6.55} & \dgood{-0.78}
& \dgood{+1.22} & \dgoodbf{-0.95}
& \dgoodbf{+9.22} & \dgood{-0.76}
\\

\midrule



\textbf{Average}
& 84.22 & 1.40
& 86.26 & \textbf{0.37}
& \textbf{88.85} & 0.56
& 87.70 & 0.33
& 86.85 & \textbf{0.25}
& \textbf{89.93} & 0.33
\\

\rowcolor{deltagray}
\dname{Average}
& \ddash & \ddash
& \dgood{+2.04} & \dgoodbf{-1.03}
& \dgoodbf{+4.63} & \dgood{-0.84}
& \dgood{+3.48} & \dgood{-1.07}
& \dgood{+2.63} & \dgoodbf{-1.15}
& \dgoodbf{+5.71} & \dgood{-1.07}
\\

\bottomrule

\end{tabular}
\end{adjustbox}

\vspace{-4mm}
\end{table}

\subsection{Competition-level Math Reasonsing}

\textbf{Data and Reward Settings.}
We evaluate PAPO on competition-level mathematical reasoning with two objectives:
answer correctness as the \emph{primary} reward and reasoning efficiency as the
\emph{auxiliary} reward. We train on DAPO-17K\citep{yu2026dapo} and evaluate on OlympiadBench\citep{he2024olympiadbench},
Minerva Math\citep{lewkowycz2022solving}, AMC\citep{li2024numinamath}, MATH-500\citep{hendrycks2021measuring}, AIME 2024, and AIME 2025. For a response $y$, the
correctness reward is $R_{\mathrm{acc}}(y)=\Ind{\hat{y}=y^*}$, where $\hat{y}$ is the extracted final answer and
$y^*$ is the reference answer. We define the length compliance reward as
$R_{\mathrm{len}}(y)=\Ind{L(y)\leq T}$, where $L(y)$ is the number of generated
response tokens and $T=8{,}000$. We set $R_{\mathrm{pri}}=R_{\mathrm{acc}}$ and
$R_{\mathrm{aux}}=R_{\mathrm{len}}$. Finally, we report answer
accuracy and length-budget compliance, with accuracy serving as the primary
evaluation metric.

\textbf{Results.}
Table~\ref{tab:main_results_transposed} summarizes the results on mathematical
reasoning.
PAPO consistently improves the primary accuracy objective to the best under both PPO- and
GRPO-based training. On average, PAPO raises accuracy from 43.6 to 56.6 over
PPO and from 51.0 to 59.9 over GRPO. These gains are accompanied by the best
average length compliance, increasing from 76.9 to 80.3 and from 81.2 to 87.6,
respectively. Compared with GDPO, PAPO-GRPO improves average accuracy from
50.9 to 59.9 while also increasing length compliance from 77.6 to 87.6.
The accuracy gains are particularly pronounced on the more challenging
Olympiad and AIME benchmarks, where auxiliary efficiency objectives can more
strongly conflict with correctness. 

\subsection{Agentic Tool Using}
\label{sec:exp_tool}
\textbf{Data and Reward Settings.}
We evaluate PAPO on \textsc{When2Tool}~\citep{sun2026when2tool}, which contains 18 tool-use environments spanning single-hop and multi-hop tasks with easy, medium, and hard subsets. We treat answer correctness as the \emph{primary} reward and tool-call efficiency as the \emph{auxiliary} reward. For a response $y$, the correctness reward is $R_{\mathrm{acc}}(y)=\Ind{\hat{y}=y^*}$, where $\hat{y}$ is the extracted final answer and
$y^*$ is the reference answer. Let $N_{\mathrm{call}}$ denote the number of executed tool calls and $B$ the task-specific tool budget; we define the auxiliary reward as $R_{\mathrm{aux}}=\max(0,1-N_{\mathrm{call}}/(B+\delta))$ with $\delta=1$. The tool call budget $B$ is from task annotations, which use the number of annotated steps. Specifically, $B=3$ for multi-step examples, and $B=1$ for single-step examples. We report accuracy and the average number of tool calls, both average and across difficulty levels.

\textbf{Results.}
Table~\ref{tab:tool_use_results} reports the tool-use results, where PAPO improves average
accuracy from 86.26 to 88.85 over PPO and from 87.70 to 89.93 over GRPO.
Under GRPO, this gain is achieved with essentially unchanged tool usage
(0.33 calls on average), while under PPO, PAPO uses moderately more tools
(0.56 vs.\ 0.37) in exchange for higher task accuracy. Compared with GDPO,
PAPO-GRPO further raises average accuracy from 86.85 to 89.93, with only a
small increase in tool calls from 0.25 to 0.33. The improvement is especially
clear on hard subsets, where PAPO-GRPO increases accuracy from 82.22 to
84.89 while keeping tool usage nearly unchanged (0.69 to 0.71). Overall, PAPO first benefits task success rather than indiscriminately
maximizing the auxiliary objective.

\subsection{Training and Evaluation Dynamics}
\label{sec:dynamics}
\begin{figure}[t]
    \centering
    \includegraphics[width=\linewidth]{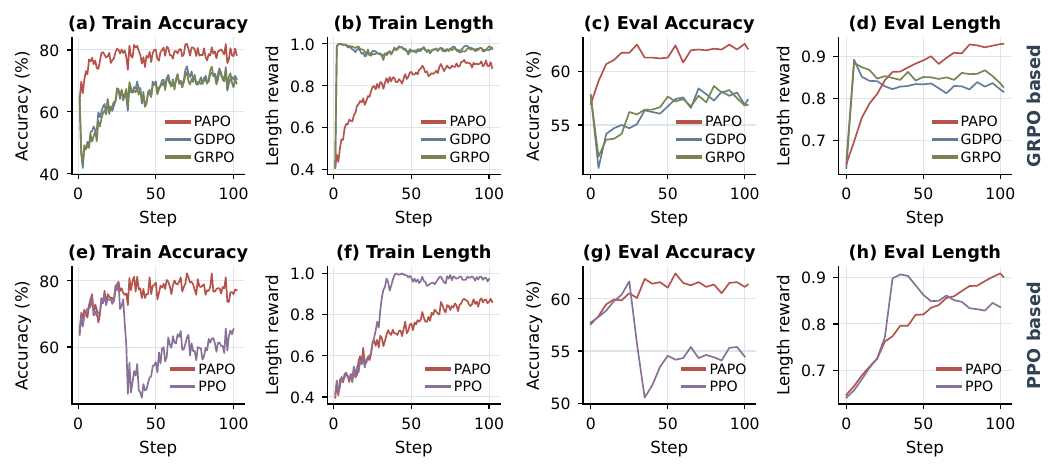}
    \vspace{-9mm}
    \caption{
    \textbf{PAPO Dynamics.}
    PAPO maintains the best accuracy $\uparrow$ while having a competitive auxiliary length compliance $\uparrow$ for GRPO-based (top) and PPO-based (bottom) methods.
    }
    \vspace{-5mm}
    \label{fig:math_dynamics}
\end{figure}

To answer \textbf{RQ2}, we further illustrate how PAPO balances correctness and reasoning efficiency throughout training on math data in Figure~\ref{fig:math_dynamics}. Across both GRPO- and PPO-based settings, baseline methods often optimize the auxiliary length objective rapidly without consistently improving evaluation accuracy, and this effect is particularly pronounced for PPO, where aggressive length optimization coincides with unstable or degraded correctness. In contrast, PAPO maintains stronger and more stable accuracy while progressively improving the length reward, ultimately achieving a better balance between the two objectives. These dynamics support our motivation that an easily optimized auxiliary objective can interfere with the primary task under unconstrained aggregation, whereas PAPO allows auxiliary supervision to refine the policy without overriding correctness.

\subsection{Ablation Studies}

To answer \textbf{RQ3}, we conduct ablations to isolate the contribution of each component.
\textbf{(a) Ablation on $\alpha$.}
Figure~\ref{fig:alpha_tradeoff} studies the effect of the auxiliary coefficient
$\alpha$. Simply tuning $\alpha$ under conventional weighted aggregation does
not provide reliable control over objective priority, as optimization can still
favor the objective that is easier to improve rather than following the
intended weighting \citep{liu2026gdpo}. In contrast, PAPO consistently maintains stronger
primary accuracy across different values of $\alpha$, while retaining
competitive auxiliary performance. \textbf{(b) Ablation on Multi-Head Critic.}
Table~\ref{tab:ppo_ablation} disentangles the contributions of the multi-head
critic (MHC) and PAPO aggregation under PPO-based training.
Introducing MHC alone provides modest improvements, increasing math accuracy
from 54.5 to 55.7 and tool-use accuracy from 86.25 to 86.84, together with
improved length compliance.
Adding PAPO on MHC yields larger gains, raising math
accuracy to 61.1 and tool-use accuracy to 88.25, indicating that both components contribute to PAPO's improvement.

\begin{figure*}[t]
\centering

\begin{minipage}[t]{0.48\textwidth}
\vspace{-4mm}
\centering
\captionof{table}{
PPO-based PAPO component ablation.
MHC denotes the multi-head critic. Best results are in
\textbf{bold} and second-best results are \underline{underlined}.
}
\vspace{1mm}
\small

\resizebox{\linewidth}{!}{%
\begin{tabular}{lcc|cc}
\toprule
& \multicolumn{2}{c|}{\textbf{Math}}
& \multicolumn{2}{c}{\textbf{Tool Use}} \\
\cmidrule(lr){2-3}
\cmidrule(lr){4-5}
\textbf{Method}
& \textbf{Acc.} $\uparrow$
& \textbf{Len.} $\uparrow$
& \textbf{Acc.} $\uparrow$
& \textbf{Calls} $\downarrow$ \\
\midrule

Base
& \underline{57.6}
& 64.1
& 84.19
& 1.40 \\

PPO
& 54.5
& 83.6
& 86.25
& \textbf{0.37} \\

PPO + MHC
& 55.7
& \underline{87.2}
& \underline{86.84}
& \underline{0.41} \\

PPO + MHC + PAPO
& \textbf{61.1}
& \textbf{90.2}
& \textbf{88.25}
& 0.56 \\

\bottomrule
\end{tabular}%
}

\vspace{2mm}

\label{tab:ppo_ablation}
\end{minipage}
\hfill
\begin{minipage}[t]{0.49\textwidth}
\vspace{-3mm}
\centering

\includegraphics[
    width=\linewidth,
    trim=0 5 0 2,
    clip
]{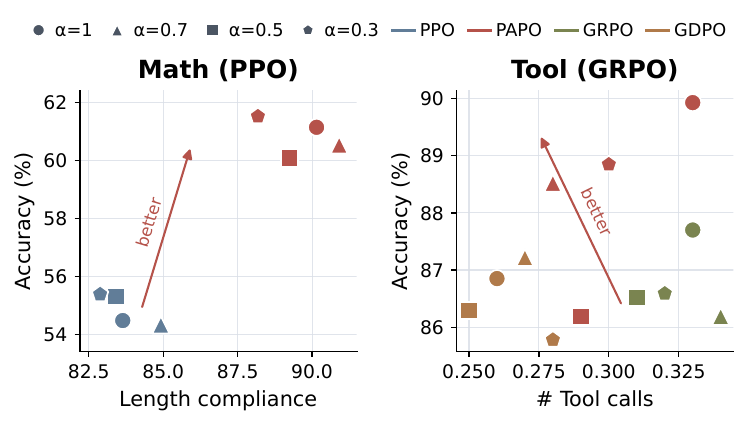}

\vspace{-4mm}
\captionof{figure}{
Sensitivity to the auxiliary weight $\alpha$.}

\label{fig:alpha_tradeoff}
\end{minipage}

\vspace{-5mm}
\end{figure*}


\begin{table*}[t]
\centering
\caption{
\textbf{Math results on DeepSeek-R1-Distill-1.5B.}
Higher is better for both accuracy (Acc.) and length compliance (Len.).
Best results are in \textbf{bold} and second-best results are
\underline{underlined}.
}
\vspace{1mm}
\label{tab:additional_math_model}

\small
\setlength{\tabcolsep}{5.0pt}
\renewcommand{\arraystretch}{1.08}

\begin{adjustbox}{max width=\textwidth,center}
\begin{tabular}{lcc|cc|cc|cc|cc|cc|cc}
\toprule
& \multicolumn{2}{c|}{\textbf{Olympiad}}
& \multicolumn{2}{c|}{\textbf{Minerva}}
& \multicolumn{2}{c|}{\textbf{AMC}}
& \multicolumn{2}{c|}{\textbf{MATH}}
& \multicolumn{2}{c|}{\textbf{AIME24}}
& \multicolumn{2}{c|}{\textbf{AIME25}}
& \multicolumn{2}{c}{\textbf{Overall}} \\
\cmidrule(lr){2-3}
\cmidrule(lr){4-5}
\cmidrule(lr){6-7}
\cmidrule(lr){8-9}
\cmidrule(lr){10-11}
\cmidrule(lr){12-13}
\cmidrule(lr){14-15}

\textbf{Method}
& Acc. & Len.
& Acc. & Len.
& Acc. & Len.
& Acc. & Len.
& Acc. & Len.
& Acc. & Len.
& Acc. & Len. \\
\midrule

Base
& 22.8 & 43.1
& 15.6 & 81.3
& 47.5 & 50.0
& 66.2 & 75.8
& 16.7 & 16.7
& 16.7 & 13.3
& 39.4 & 60.0 \\

PPO
& \underline{28.6} & \underline{66.1}
& \textbf{16.7} & \underline{93.8}
& \textbf{65.0} & \textbf{80.0}
& \underline{73.6} & \underline{91.0}
& \textbf{16.7} & \underline{36.7}
& \textbf{20.0} & \textbf{46.7}
& \underline{45.8} & \underline{75.8} \\

PAPO
& \textbf{29.9} & \textbf{74.6}
& \textbf{16.7} & \textbf{95.8}
& \underline{62.5} & \textbf{80.0}
& \textbf{78.0} & \textbf{91.8}
& \textbf{16.7} & \textbf{43.3}
& \textbf{20.0} & \textbf{46.7}
& \textbf{48.1} & \textbf{80.7} \\

\bottomrule
\end{tabular}
\end{adjustbox}

\vspace{-3mm}
\end{table*}

\subsection{Additional Model Scale and Domains}

\label{sec:additional_models}

To answer \textbf{RQ4}, we extend PAPO to additional models,
scales, and different reward semantics and domains, including multimodal
safety alignment.

\textbf{Additional Model Scale} To examine whether PAPO generalizes beyond the primary model setting, we
further evaluate it on additional smaller-scale backbone. Table~\ref{tab:additional_math_model} further demonstrates the effectiveness of PAPO on DeepSeek-R1-Distill-1.5B~\citep{guo2025deepseek}. Compared with PPO, PAPO improves average accuracy from 45.8 to 48.1 and length compliance from 75.8 to 80.7, with consistent gains across most benchmarks. These results indicate that PAPO remains effective across different model families and reasoning backbones while jointly improving the primary and auxiliary objectives. We also provide results of Qwen3-1.7B for agentic tool use in Section~\ref{app:additional_model_results}.

\begin{wrapfigure}{r}{0.45\linewidth}
    \centering
    \vspace{-6mm}
    \includegraphics[width=1.0\linewidth]{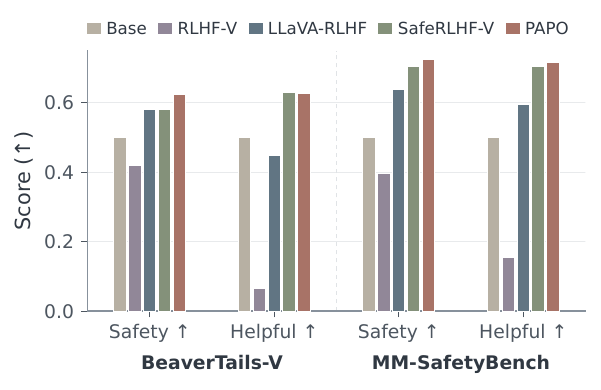}
    \vspace{-9mm}
    \caption{
    Multimodal safety alignment.
    }
    \label{fig:safety_results}
    \vspace{-3mm}
\end{wrapfigure}
\textbf{Additional Domain}
We further evaluate PAPO in the multimodal safety-alignment setting of
SafeRLHF-V~\citep{ji2025saferlhfv}, treating \emph{safety} as the primary
objective and \emph{helpfulness} as the auxiliary objective. We evaluate
Qwen2-VL-7B (in Section~\ref{app:additional_model_results}) and LLaVA-1.6-7B~\citep{liu2024llavanext} on BeaverTails-V~\citep{ji2025saferlhfv} and MM-SafetyBench~\citep{liu2024mm} against Base, RLHF~\citep{ouyang2022training}
RLHF-V \citep{sun2024aligning}, LLaVA-RLHF \citep{sun2024aligning}, and SafeRLHF-V \citep{ji2025saferlhfv}. Figure~\ref{fig:safety_results} shows that
PAPO consistently improves safety while preserving strong helpfulness, and
achieves a favorable trade-off across both model families. This further
demonstrates that PAPO generalizes to different rewards and domains.

Besides, we provide additional model results in Appendix~\ref{app:additional_model_results}, additional training results for the tool use in Appendix~\ref{app:tool_dynamics}, and per-dataset evaluation dynamics during training in Appendix~\ref{app:per_dataset_dynamics}. We conduct multiple runs with random seeds and evaluate pass@k in Appendix~\ref{app:passk} to complement the deterministic greedy-decoding evaluation above. Other implementation details and examples are listed in Appendix~\ref{app:training_details} and Appendix~\ref{app:training_examples}.

\section{Conclusion}

We propose \textbf{Primary Advantage Policy Optimization (PAPO)}, a priority-aware advantage aggregation framework that preserves aligned auxiliary supervision while smoothly constraining conflicting signals. PAPO applies to both GRPO and PPO with token-level optimization with objective-specific critic heads. Across mathematical reasoning, agentic tool use, and multimodal safety alignment, PAPO consistently improves primary-objective performance while maintaining strong auxiliary behavior, with gains that generalize across models, scales, reward semantics, and optimization frameworks. These results suggest that explicitly modeling reward objective priority is an effective way for multi-reward policy optimization.

 \clearpage
\newpage

\section*{Reproducibility Statement}

\textbf{Code.}
We provide the implementation, training configurations, and evaluation scripts
needed to reproduce the main experiments in the supplementary material. The
code is available \href{https://anonymous.4open.science/r/papo-BC29/}{here}, which covers both the GRPO- and PPO-based implementations of PAPO, including
the dual-head critic used in the PPO setting.

\textbf{Training and evaluation.}
Appendix~\ref{app:training_details} documents the datasets, reward definitions, model backbones,
training objectives, hyperparameters, and evaluation protocols used throughout
the experiments. We additionally provide implementation details for both
mathematical reasoning and agentic tool-use settings, including the decoding
protocols used for greedy and Pass@$k$ evaluation.

\textbf{Additional experiments.}
Appendix~\ref{app:exp_results} reports additional ablations and analyses, including the effects
of the auxiliary-reward weight, critic design, training and evaluation
dynamics, and Pass@$k$ evaluation under stochastic decoding. Prompt templates
and task-specific evaluation details are also provided to facilitate
reproduction of the reported results.

\section*{Ethics Statement}

This work studies multi-objective reinforcement learning for large language
model post-training across mathematical reasoning, agentic tool use, and safety
alignment. Our experiments do not involve human subjects or private user data,
and tool-use experiments are conducted in controlled benchmark environments.
PAPO focuses on prioritizing a primary objective while incorporating auxiliary
objectives, including settings where safety is primary and helpfulness is
secondary. We encourage future applications to carefully define the objective
hierarchy and evaluate whether it reflects the intended safety and deployment
requirements.

\section*{AI Use Statement}

Generative AI tools were used to assist with grammar correction, language
polishing, and improving the clarity and concision of the manuscript, as well
as for discovery of potentially relevant literature. We
reviewed all AI-assisted edits and independently verified suggested references
against the original sources. The authors take full responsibility for the
final text, citations, claims, experimental results, and presentation of the
work.

\bibliography{iclr2027_conference}

@article{schulman2017proximal,
  title={Proximal Policy Optimization Algorithms},
  author={Schulman, John and Wolski, Filip and Dhariwal, Prafulla and Radford, Alec and Klimov, Oleg},
  journal={arXiv preprint arXiv:1707.06347},
  year={2017}
}

@article{zheng2025gspo,
  title={Group Sequence Policy Optimization},
  author={Zheng, Chujie and Liu, Shixuan and Li, Mingze and Chen, Xiong-Hui and Yu, Bowen and Gao, Chang and Dang, Kai and Liu, Yuqiong and Men, Rui and Yang, An and Zhou, Jingren and Lin, Junyang},
  journal={arXiv preprint arXiv:2507.18071},
  year={2025}
}

@article{dai2023saferlhf,
  title={Safe RLHF: Safe Reinforcement Learning from Human Feedback},
  author={Dai, Josef and Pan, Xuehai and Sun, Ruiyang and Ji, Jiaming and Xu, Xinbo and Liu, Mickel and Wang, Yizhou and Yang, Yaodong},
  journal={arXiv preprint arXiv:2310.12773},
  year={2023}
}

@inproceedings{ichihara2025awgrpo,
  title={Auto-Weighted Group Relative Preference Optimization for Multi-Objective Text Generation Tasks},
  author={Ichihara, Yuki and Jinnai, Yuu},
  booktitle={Proceedings of the 2025 Conference on Empirical Methods in Natural Language Processing: Industry Track},
  pages={1134--1147},
  publisher={Association for Computational Linguistics},
  year={2025},
  doi={10.18653/v1/2025.emnlp-industry.80}
}

@article{liu2026gdpo,
  title={GDPO: Group reward-Decoupled Normalization Policy Optimization for Multi-reward RL Optimization},
  author={Liu, Shih-Yang and Dong, Xin and Lu, Ximing and Diao, Shizhe and Belcak, Peter and Liu, Mingjie and Chen, Min-Hung and Yin, Hongxu and Wang, Yu-Chiang Frank and Cheng, Kwang-Ting and Choi, Yejin and Kautz, Jan and Molchanov, Pavlo},
  journal={arXiv preprint arXiv:2601.05242},
  year={2026}
}

@article{wang2026smopd,
  title={SMOPD: Multi-Reward Reinforcement Learning via Specialize-and-Merge Online Policy Distillation},
  author={Wang, Wen and Bao, Jiahua and Tu, Yongsiqi and Liu, Yihao and Zhou, Haotian and Ma, Haoxuan and Zhou, Mengyu and Fan, Wenkui and He, Junwei and Jiang, Xiaoxi and Jiang, Guanjun},
  journal={arXiv preprint arXiv:2608.03092},
  year={2026}
}

@article{sun2026when2tool,
  title={LLM Agents Already Know When to Call Tools -- Even Without Reasoning},
  author={Sun, Chung-En and Liu, Linbo and Yan, Ge and Wang, Zimo and Weng, Tsui-Wei},
  journal={arXiv preprint arXiv:2605.09252},
  year={2026}
}

@techreport{microsoft2026mai,
  title={Mai-thinking-1: Building a hill-climbing machine},
  author={Microsoft AI Team and others},
  year={2026},
  institution={Technical report, Microsoft AI, 2026. https://microsoft. ai/pdf/mai-thinking~…}
}

@inproceedings{liu2026learn,
  title={Learn to reason efficiently with adaptive length-based reward shaping},
  author={Liu, Wei and Zhou, Ruochen and Deng, Yiyun and Huang, Yuzhen and Liu, Junteng and Deng, Yuntian and Zhang, Yizhe and He, Junxian},
  booktitle={International Conference on Learning Representations},
  year={2026}
}

@article{liu2025dler,
  title={Dler: Doing length penalty right-incentivizing more intelligence per token via reinforcement learning},
  author={Liu, Shih-Yang and Dong, Xin and Lu, Ximing and Diao, Shizhe and Liu, Mingjie and Chen, Min-Hung and Yin, Hongxu and Wang, Yu-Chiang Frank and Cheng, Kwang-Ting and Choi, Yejin and others},
  journal={arXiv preprint arXiv:2510.15110},
  year={2025}
}

@inproceedings{ji2025saferlhfv,
  title={Safe RLHF-V: Safe Reinforcement Learning from Multi-modal Human Feedback},
  author={Ji, Jiaming and Chen, Xinyu and Pan, Rui and Zhu, Han and
          Li, Jiahao and Hong, Donghai and Chen, Boyuan and Zhou, Jiayi and
          Wang, Kaile and Dai, Juntao and Chan, Chi-Min and Han, Sirui and
          Guo, Yike and Yang, Yaodong},
  booktitle={Advances in Neural Information Processing Systems},
  year={2025}
}

@article{li2024numinamath,
  title={Numinamath: The largest public dataset in ai4maths with 860k pairs of competition math problems and solutions},
  author={Li, Jia and Beeching, Edward and Tunstall, Lewis and Lipkin, Ben and Soletskyi, Roman and Huang, Shengyi and Rasul, Kashif and Yu, Longhui and Jiang, Albert Q and Shen, Ziju and others},
  journal={Hugging Face repository},
  year={2024}
}

@article{he2024olympiadbench,
  title={Olympiadbench: A challenging benchmark for promoting agi with olympiad-level bilingual multimodal scientific problems},
  author={He, Chaoqun and Luo, Renjie and Bai, Yuzhuo and Hu, Shengding and Thai, Zhen Leng and Shen, Junhao and Hu, Jinyi and Han, Xu and Huang, Yujie and Zhang, Yuxiang and others},
  journal={arXiv preprint arXiv:2402.14008},
  year={2024}
}

@article{hendrycks2021measuring,
  title={Measuring mathematical problem solving with the math dataset},
  author={Hendrycks, Dan and Burns, Collin and Kadavath, Saurav and Arora, Akul and Basart, Steven and Tang, Eric and Song, Dawn and Steinhardt, Jacob},
  journal={arXiv preprint arXiv:2103.03874},
  year={2021}
}

@article{lewkowycz2022solving,
  title={Solving quantitative reasoning problems with language models},
  author={Lewkowycz, Aitor and Andreassen, Anders and Dohan, David and Dyer, Ethan and Michalewski, Henryk and Ramasesh, Vinay and Slone, Ambrose and Anil, Cem and Schlag, Imanol and Gutman-Solo, Theo and others},
  journal={Advances in neural information processing systems},
  year={2022}
}

@article{yu2026dapo,
  title={Dapo: An open-source llm reinforcement learning system at scale},
  author={Yu, Qiying and Zhang, Zheng and Zhu, Ruofei and Yuan, Yufeng and Zuo, Xiaochen and Yue, Yu and Dai, Weinan and Fan, Tiantian and Liu, Gaohong and Liu, Lingjun and others},
  journal={Advances in Neural Information Processing Systems},
  year={2026}
}

@article{guo2025deepseek,
  title={Deepseek-r1: Incentivizing reasoning capability in llms via reinforcement learning},
  author={Guo, Daya and Yang, Dejian and Zhang, Haowei and Song, Junxiao and Zhang, Ruoyu and Xu, Runxin and Zhu, Qihao and Ma, Shirong and Wang, Peiyi and Bi, Xiao and others},
  journal={arXiv preprint arXiv:2501.12948},
  year={2025}
}

@article{ouyang2022training,
  title={Training language models to follow instructions with human feedback},
  author={Ouyang, Long and Wu, Jeffrey and Jiang, Xu and Almeida, Diogo and Wainwright, Carroll and Mishkin, Pamela and Zhang, Chong and Agarwal, Sandhini and Slama, Katarina and Ray, Alex and others},
  journal={Advances in neural information processing systems},
  year={2022}
}

@inproceedings{sun2024aligning,
  title={Aligning large multimodal models with factually augmented rlhf},
  author={Sun, Zhiqing and Shen, Sheng and Cao, Shengcao and Liu, Haotian and Li, Chunyuan and Shen, Yikang and Gan, Chuang and Gui, Liangyan and Wang, Yu-Xiong and Yang, Yiming and others},
  booktitle={Findings of the association for computational linguistics: ACL 2024},
  year={2024}
}

@article{wang2024qwen2,
  title={Qwen2-vl: Enhancing vision-language model's perception of the world at any resolution},
  author={Wang, Peng and Bai, Shuai and Tan, Sinan and Wang, Shijie and Fan, Zhihao and Bai, Jinze and Chen, Keqin and Liu, Xuejing and Wang, Jialin and Ge, Wenbin and others},
  journal={arXiv preprint arXiv:2409.12191},
  year={2024}
}

@article{yang2025qwen3,
  title={Qwen3 technical report},
  author={Yang, An and Li, Anfeng and Yang, Baosong and Zhang, Beichen and Hui, Binyuan and Zheng, Bo and Yu, Bowen and Gao, Chang and Huang, Chengen and Lv, Chenxu and others},
  journal={arXiv preprint arXiv:2505.09388},
  year={2025}
}

@misc{liu2024llavanext,
    title={LLaVA-NeXT: Improved reasoning, OCR, and world knowledge},
    url={https://llava-vl.github.io/blog/2024-01-30-llava-next/},
    author={Liu, Haotian and Li, Chunyuan and Li, Yuheng and Li, Bo and Zhang, Yuanhan and Shen, Sheng and Lee, Yong Jae},
    month={January},
    year={2024}
}

@inproceedings{
aggarwal2025l,
title={L1: Controlling How Long A Reasoning Model Thinks With Reinforcement Learning},
author={Pranjal Aggarwal and Sean Welleck},
booktitle={Second Conference on Language Modeling},
year={2025}
}

@inproceedings{liu2024mm,
  title={Mm-safetybench: A benchmark for safety evaluation of multimodal large language models},
  author={Liu, Xin and Zhu, Yichen and Gu, Jindong and Lan, Yunshi and Yang, Chao and Qiao, Yu},
  booktitle={European Conference on Computer Vision},
  pages={386--403},
  year={2024},
  organization={Springer}
}

@article{jiang2026dvao,
  title={DVAO: Dynamic Variance-adaptive Advantage Optimization for Multi-reward Reinforcement Learning},
  author={Jiang, Guochao and Song, Jingyi and Quan, Guofeng and Hao, Chuzhan and Liu, Guohua and Zhang, Yuewei},
  journal={arXiv preprint arXiv:2605.25604},
  year={2026}
}

@article{yue2025vapo,
  title={Vapo: Efficient and reliable reinforcement learning for advanced reasoning tasks},
  author={Yue, Yu and Yuan, Yufeng and Yu, Qiying and Zuo, Xiaochen and Zhu, Ruofei and Xu, Wenyuan and Chen, Jiaze and Wang, Chengyi and Fan, TianTian and Du, Zhengyin and others},
  journal={arXiv preprint arXiv:2504.05118},
  year={2025}
}

@article{ma2026fipo,
  title={Fipo: Eliciting deep reasoning with future-kl influenced policy optimization},
  author={Ma, Chiyu and Yang, Shuo and Huang, Kexin and Lu, Jinda and Meng, Haoming and Wang, Shangshang and Ding, Bolin and Vosoughi, Soroush and Wang, Guoyin and Zhou, Jingren},
  journal={arXiv preprint arXiv:2603.19835},
  year={2026}
}

@inproceedings{diao2026addressing,
  title={Addressing overthinking in large vision-language models via gated perception-reasoning optimization},
  author={Diao, Xingjian and Liu, Zheyuan and Zhang, Chunhui and Wu, Weiyi and Kong, Keyi and Shi, Lin and Ding, Kaize and Vosoughi, Soroush and Gui, Jiang},
  booktitle={Findings of the Association for Computational Linguistics: ACL 2026},
  year={2026}
}

@article{diao2026errors,
  title={Of Errors and Echoes: Responsibility-Aware Credit Assignment for Tool-Use Agents},
  author={Diao, Xingjian and Yang, Tianyu and Huang, Wenjun and Yuan, Xiangchi and Zhang, Chunhui and Zhao, Xingchen and Wu, Weiyi and Xu, Haotian and Vosoughi, Soroush and Zhang, Xiangliang and others},
  journal={arXiv preprint arXiv:2610.16027},
  year={2026}
}

@article{yuan2025mitigating,
  title={Mitigating forgetting between supervised and reinforcement learning yields stronger reasoners},
  author={Yuan, Xiangchi and Chen, Xiang and Yu, Tong and Shi, Dachuan and Jin, Can and Lee, Wenke and Mitra, Saayan},
  journal={arXiv preprint arXiv:2510.04454},
  year={2025}
}

@inproceedings{yuan-etal-2026-behavior,
    title = "Behavior Knowledge Merge in Reinforced Agentic Models",
    author = "Yuan, Xiangchi  and
      Shi, Dachuan  and
      Zhang, Chunhui  and
      Liu, Zheyuan  and
      Yao, Shenglong  and
      Vosoughi, Soroush  and
      Lee, Wenke",
    booktitle = "Proceedings of the 64th Annual Meeting of the {A}ssociation for {C}omputational {L}inguistics",
    year = "2026"
}

@inproceedings{
shi2026swireasoning,
title={SwiReasoning: Switch-Thinking in Latent and Explicit for Pareto-Superior Reasoning {LLM}s},
author={Dachuan Shi and Abedelkadir Asi and Keying Li and Xiangchi Yuan and Leyan Pan and Wenke Lee and Wen Xiao},
booktitle={The Fourteenth International Conference on Learning Representations},
year={2026}
}

@article{shi2026copt,
  title={CopT: Contrastive On-Policy Thinking with Continuous Spaces for General and Agentic Reasoning},
  author={Shi, Dachuan and Zhu, Hanlin and Yuan, Xiangchi and Zhao, Wanjia and Xia, Kejing and Xiao, Wen and Lee, Wenke},
  journal={arXiv preprint arXiv:2605.20075},
  year={2026}
}
\bibliographystyle{iclr2027_conference}

\clearpage
\appendix

\startcontents[appendices]
\section*{CONTENTS OF APPENDIX}
\printcontents[appendices]{}{1}{\normalsize}

\clearpage
\section{Proof}
\label{app:normalization_priority}

For completeness, we restate
Proposition~\ref{prop:papo_priority_main} and provide its full proof.

\begin{propositionbox}
\noindent
\textbf{Proposition~\ref{prop:papo_priority_main}.}
Let an optimization unit denote a rollout in GRPO or a token position in PPO;
the associated normalization set is a rollout group for GRPO and a token-level
mini-batch for PPO.
For any unit $u$ with $A_{u,\mathrm{pri}}\neq 0$, PAPO preserves the primary
sign during aggregation:
$\operatorname{sign}(A_{u,\mathrm{agg}})
=
\operatorname{sign}(A_{u,\mathrm{pri}})$.
Moreover, for any two units $u$ and $v$ in the same normalization set,
centered normalization
$\hat{A}_k=(A_{k,\mathrm{agg}}-\mu)/(\sigma+\epsilon)$
preserves their primary-consistent ordering:
\[
A_{u,\mathrm{pri}}>0>A_{v,\mathrm{pri}}
\quad\Longrightarrow\quad
\hat{A}_u>\hat{A}_v.
\]
\end{propositionbox}

\begin{proof}

\noindent\textit{Sign preservation during PAPO aggregation.}
Consider any optimization unit $u$ with
$A_{u,\mathrm{pri}}\neq 0$.
When the primary and auxiliary advantages are aligned,
$A_{u,\mathrm{pri}}A_{u,\mathrm{aux}}\ge 0$, PAPO reduces to
\[
A_{u,\mathrm{agg}}
=
A_{u,\mathrm{pri}}
+
\alpha A_{u,\mathrm{aux}}.
\]
Since $\alpha\ge0$ and the two terms have the same sign,
\[
\operatorname{sign}(A_{u,\mathrm{agg}})
=
\operatorname{sign}(A_{u,\mathrm{pri}}).
\]

When the two advantages conflict,
$A_{u,\mathrm{pri}}A_{u,\mathrm{aux}}<0$, Eq.~\ref{eq:papo} can be rewritten as
\begin{equation}
A_{u,\mathrm{agg}}
=
A_{u,\mathrm{pri}}
\left[
1-
\tanh\left(
\frac{
\alpha |A_{u,\mathrm{aux}}|
}{
|A_{u,\mathrm{pri}}|+\epsilon
}
\right)
\right].
\label{eq:papo_sign_form}
\end{equation}
Because
\[
0
\le
\tanh\left(
\frac{
\alpha |A_{u,\mathrm{aux}}|
}{
|A_{u,\mathrm{pri}}|+\epsilon
}
\right)
<1,
\]
the multiplicative factor in Eq.~\ref{eq:papo_sign_form} satisfies
\[
0
<
1-
\tanh\left(
\frac{
\alpha |A_{u,\mathrm{aux}}|
}{
|A_{u,\mathrm{pri}}|+\epsilon
}
\right)
\le 1.
\]
Hence,
\[
\operatorname{sign}(A_{u,\mathrm{agg}})
=
\operatorname{sign}(A_{u,\mathrm{pri}}).
\]
Therefore, PAPO cannot reverse the sign induced by the primary objective
during aggregation.

\medskip
\noindent\textit{Primary-consistent ordering after centered normalization.}
Consider two optimization units $u$ and $v$ in the same normalization set such
that
\[
A_{u,\mathrm{pri}}>0>A_{v,\mathrm{pri}}.
\]
By the sign-preservation result above,
\[
A_{u,\mathrm{agg}}>0>A_{v,\mathrm{agg}},
\]
and therefore
\[
A_{u,\mathrm{agg}}>A_{v,\mathrm{agg}}.
\]

The post-fusion centered normalization is
\begin{equation}
\hat{A}_k
=
\frac{
A_{k,\mathrm{agg}}-\mu
}{
\sigma+\epsilon
}.
\label{eq:post_norm}
\end{equation}
Since $\sigma+\epsilon>0$, Eq.~\ref{eq:post_norm} is a strictly increasing
affine transformation. Therefore,
\[
A_{u,\mathrm{agg}}>A_{v,\mathrm{agg}}
\quad\Longrightarrow\quad
\hat{A}_u>\hat{A}_v.
\]
Combining the two results gives
\[
A_{u,\mathrm{pri}}>0>A_{v,\mathrm{pri}}
\quad\Longrightarrow\quad
\hat{A}_u>\hat{A}_v.
\]
Thus, the primary-consistent ordering induced by PAPO is preserved after the
centered normalization used in both GRPO and PPO.

\end{proof}

Normalization does not necessarily preserve the absolute sign of each
individual fused advantage. For example, a positive
$A_{u,\mathrm{agg}}$ may become negative after subtracting the normalization
mean. This does not constitute an auxiliary-induced priority inversion: the
sign change results from shifting the common advantage reference point rather
than from the auxiliary objective overturning the primary preference.
Therefore, PAPO provides two complementary guarantees: it preserves the
primary sign during advantage aggregation, while the resulting
\emph{primary-consistent ordering} remains intact after centered normalization.



\section{Additional Experimental Results}
\label{app:exp_results}

\subsection {Additional Models' Results}
\label{app:additional_model_results}

\paragraph{Additional Tool-Use Results.} Table~\ref{tab:additional_tool_model} shows the tool-use results of applying PAPO and baselines to the Qwen3-1.7B~\citep{yang2025qwen3} model. PAPO achieves the highest overall accuracy of 81.11, compared with 75.96 for GRPO and 76.85 for GDPO, with notable gains on the medium and hard subsets. Although PAPO uses slightly more tool calls on average, the higher accuracy suggests that it avoids over-optimizing tool efficiency at the cost of accuracy, or task success rate, with comparable tool call efficiency.

\begin{table}[h]
\centering
\caption{
\textbf{Tool-use results on Qwen3-1.7B.}
Higher is better for accuracy, while lower is better for tool calls.
Best results are in \textbf{bold} and second-best results are
\underline{underlined}.
}
\vspace{2mm}
\label{tab:additional_tool_model}

\small
\setlength{\tabcolsep}{4.0pt}
\renewcommand{\arraystretch}{1.08}

\begin{adjustbox}{max width=\columnwidth,center}
\begin{tabular}{lcc|cc|cc|cc}
\toprule
& \multicolumn{2}{c|}{\textbf{Easy}}
& \multicolumn{2}{c|}{\textbf{Medium}}
& \multicolumn{2}{c|}{\textbf{Hard}}
& \multicolumn{2}{c}{\textbf{Overall}} \\
\cmidrule(lr){2-3}
\cmidrule(lr){4-5}
\cmidrule(lr){6-7}
\cmidrule(lr){8-9}

\textbf{Method}
& Acc. $\uparrow$ & Calls $\downarrow$
& Acc. $\uparrow$ & Calls $\downarrow$
& Acc. $\uparrow$ & Calls $\downarrow$
& Acc. $\uparrow$ & Calls $\downarrow$ \\
\midrule

Base
& 60.11 & 1.33
& 60.56 & 1.43
& 55.89 & 1.49
& 58.85 & 1.42 \\

GRPO
& 85.33 & 0.18
& 76.33 & 0.30
& 66.22 & 0.65
& 75.96 & 0.38 \\

GDPO
& \textbf{88.86} & \textbf{0.13}
& \underline{76.00} & \textbf{0.21}
& \underline{66.00} & \textbf{0.54}
& \underline{76.85} & \textbf{0.29} \\

PAPO
& \underline{88.00} & \underline{0.17}
& \textbf{80.78} & \underline{0.29}
& \textbf{74.56} & \underline{0.87}
& \textbf{81.11} & \underline{0.44} \\

\bottomrule
\end{tabular}
\end{adjustbox}

\vspace{-2mm}
\end{table}


\paragraph{Additional Safety Alignment Results.} We continue to conduct additional safety alignment experiments on the additional multi-modality model Qwen2-VL-7B~\citep{wang2024qwen2} and illustrate the results in Figure~\ref{fig:safety_results_qwen}. Compared with other baselines, our PAPO achieves the best primary safety performance while having the best or very competitive auxiliary helpful score on both datasets.

\begin{figure}[h]
    \centering

    \includegraphics[width=0.5\linewidth]{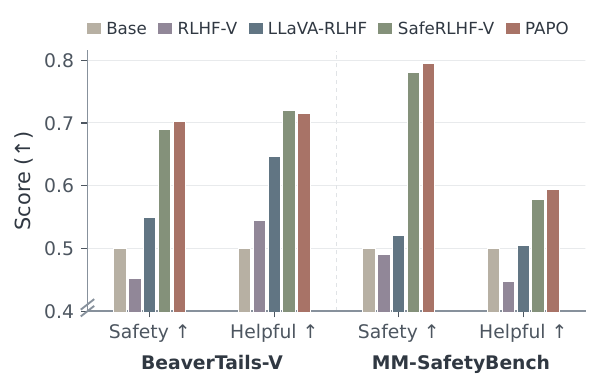}

    \caption{
    Multimodal safety alignment results for the Qwen2-VL-7B model.
    }
    \label{fig:safety_results_qwen}

\end{figure}


\subsection{Training and Evaluation Dynamics for Tool}
\label{app:tool_dynamics}
\begin{figure}[h]
    \centering
    \includegraphics[width=\linewidth]{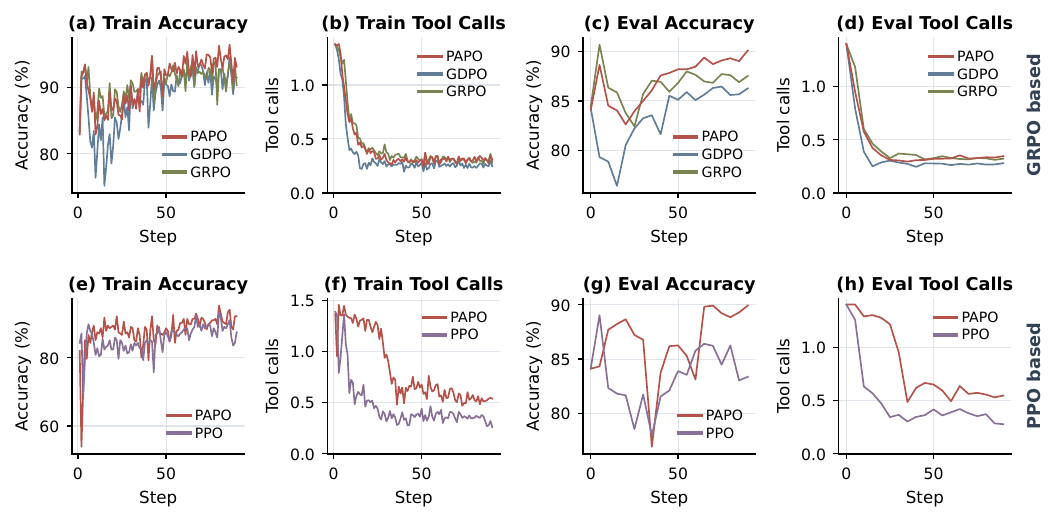}
    \caption{
    \textbf{Dynamics on agentic tool use.}
    Although baseline methods can reduce tool calls aggressively, this behavior
    can coincide with degraded accuracy.
    PAPO maintains stronger primary-objective performance while still reducing tool usage, yielding a better trade-off for GRPO-based
    (top) and PPO-based (bottom) methods.
    }
    \label{fig:tool_dynamics}
\end{figure}

Figure~\ref{fig:tool_dynamics} shows the optimization dynamics of task correctness and tool-use efficiency. For both GRPO- and PPO-based methods, the number of tool calls decreases rapidly during training, suggesting that the auxiliary efficiency objective is relatively easy to optimize; however, lower tool usage does not necessarily yield better performance in accuracy. PAPO maintains stronger and more stable evaluation accuracy while using a moderate number of tool calls, whereas GRPO, GDPO, and especially standard PPO can reduce tool usage more aggressively at the cost of weaker or more volatile correctness. 

\subsection{Per-Dataset Evaluation Dynamics}
\label{app:per_dataset_dynamics}

\begin{figure*}[h]
    \centering
    \includegraphics[width=0.98\textwidth]{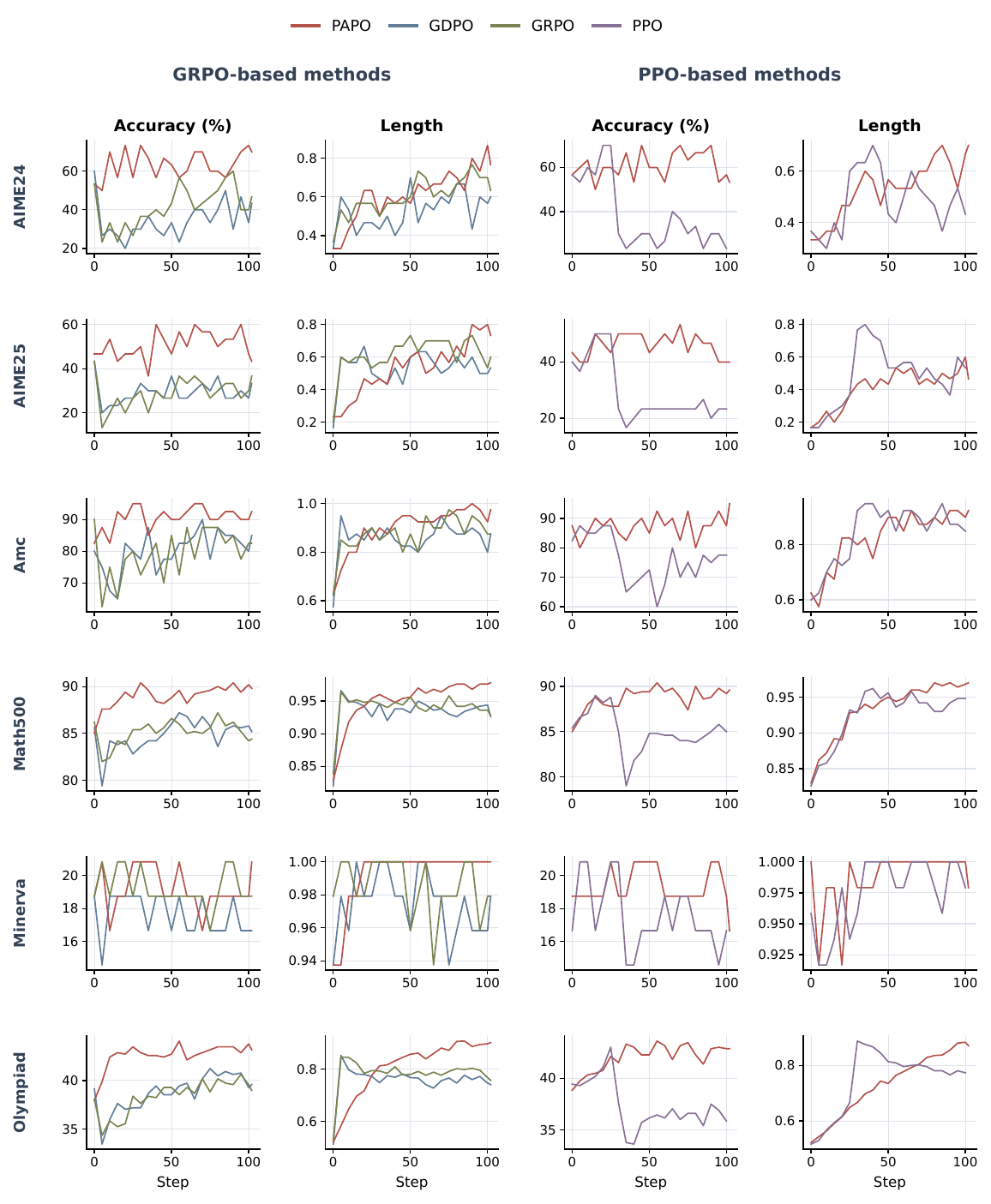}
    \caption{
    \textbf{Per-dataset evaluation dynamics on mathematical reasoning.}
    We report accuracy and length compliance throughout training for each
    benchmark under GRPO- and PPO-based optimization.
    }
    \label{fig:math_dataset_dynamics}

\end{figure*}

\begin{figure*}[h]
    \centering
    \includegraphics[width=0.98\textwidth]{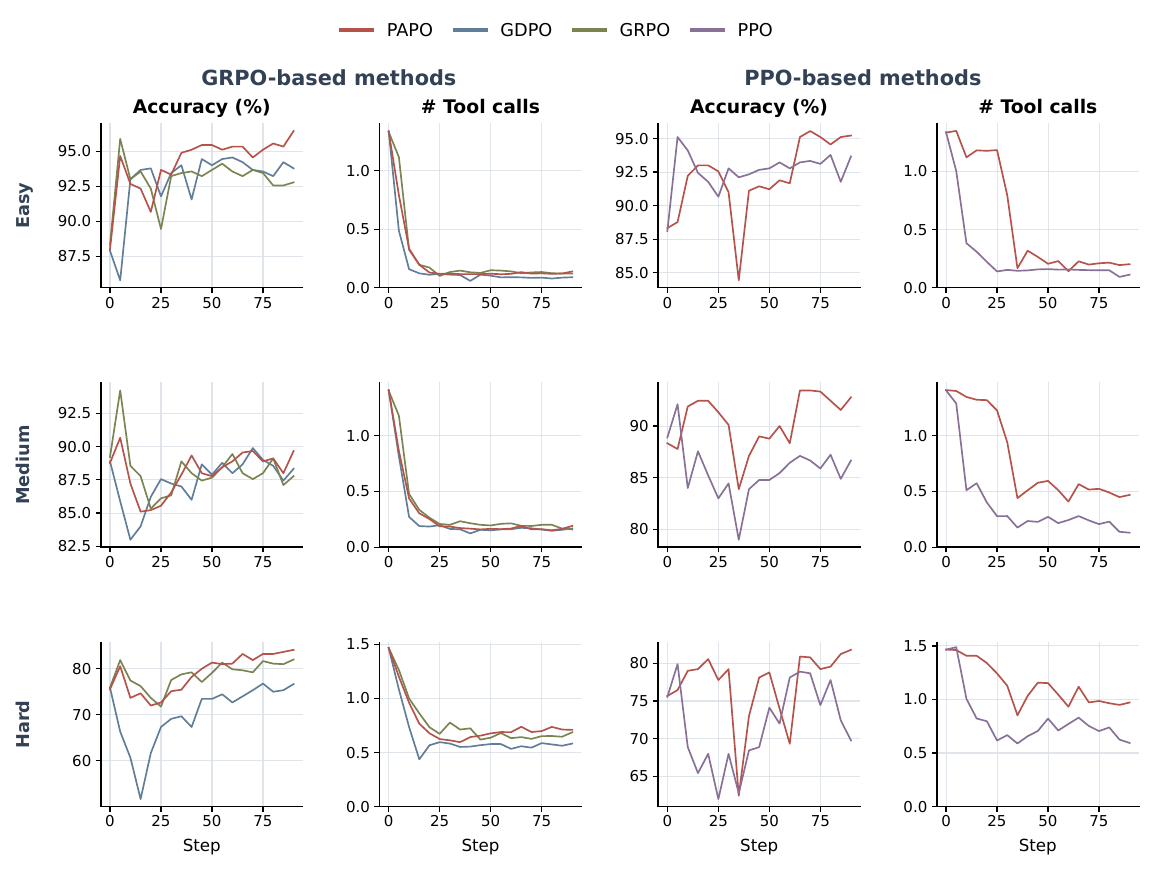}
    \caption{
    \textbf{Per-difficulty evaluation dynamics on agentic tool use.}
    We report accuracy and the number of tool calls for easy, medium, and hard
    instances under GRPO- and PPO-based optimization.
    }
    \label{fig:tool_dataset_dynamics}
\end{figure*}

To complement the aggregated training dynamics in Section~\ref{sec:dynamics} and Appendix~\ref{app:tool_dynamics}, we further
report evaluation dynamics during training separately for each benchmark (math) and difficulty level (tool using). These results provide a more fine-grained view of how PAPO balances the
primary and auxiliary objectives throughout optimization.

\paragraph{Mathematical Reasoning.}
Figure~\ref{fig:math_dataset_dynamics} presents the per-dataset evaluation
dynamics on AIME24, AIME25, AMC, MATH-500, Minerva, and Olympiad.
Across both optimization families, the baselines tend to improve the auxiliary
length objective relatively quickly, but this does not consistently bring stronger correctness. Under GRPO-based optimization, GRPO and GDPO often
reach comparable or higher length compliance early in training while remaining
substantially below PAPO in accuracy, particularly on AIME24, AIME25, AMC,
MATH-500, and Olympiad. PAPO instead maintains stronger accuracy while
progressively improving length compliance, often matching or surpassing the
baselines toward the end of training. The same pattern is even more pronounced
for PPO: PPO rapidly increases the length reward, but this is
accompanied by large accuracy drops on evaluation benchmarks, whereas PAPO has
a more gradual efficiency improvement and preserves very stable
correctness improvements. Minerva is less discriminative because length compliance quickly
approaches its ceiling and accuracy varies over a narrow discrete range.
Overall, the per-dataset trajectories show that GRPO- and
PPO-based baselines aggressively optimizing the
auxiliary length objective, interfering with correctness, while PAPO better preserves the primary accuracy objective as efficiency improves.

\paragraph{Agentic Tool Use.}
Figure~\ref{fig:tool_dataset_dynamics} reports evaluation dynamics separately
for easy, medium, and hard tool-use subsets. All methods rapidly reduce tool
calls, indicating that the auxiliary efficiency objective is comparatively easy to optimize. However, the difference in accuracy becomes increasingly apparent
as task difficulty grows. PAPO maintains strong accuracy across all three
difficulty levels and is particularly robust on medium and hard subsets,
where more aggressive reductions in tool usage can coincide with unstable or
lower accuracy. This effect is especially visible in the PPO-based setting,
where PPO often reaches fewer tool calls earlier but exhibits considerably more volatile correctness. PAPO instead retains additional tool usage when needed, achieving our intended priority: tool efficiency should refine the solution process without overriding successful task completion.

\subsection{Additional Pass@k Evaluation for Math}
\label{app:passk}

Table~\ref{tab:math_passk} provides a complementary Pass@$k$ evaluation on the
mathematical reasoning benchmarks. For each problem, we sample $n$ independent
responses and define
\[
\mathrm{Pass@}k
=
\frac{1}{N}
\sum_{i=1}^{N}
\mathbb{I}\!\left[
\max_{j\in\{1,\ldots,k\}} r_{i,j}=1
\right],
\]
where $r_{i,j}\in\{0,1\}$ indicates whether the $j$-th sampled response for
problem $i$ is correct. Thus, Pass@$k$ measures the fraction of problems for
which at least one of the first $k$ sampled responses is correct.
Unlike the main results in Table~\ref{tab:main_results_transposed}, which use
greedy decoding without sampling to gain stable evaluation results, Pass@$k$ requires multiple candidate
responses for each prompt and therefore uses stochastic decoding. Consequently,
the absolute values in this table are not directly comparable to those in the
main evaluation. We use greedy decoding for the main results because it
evaluates the learned policy through a deterministic single-response output,
avoids additional variance from sampling, and provides a cleaner comparison
across optimization methods. Pass@$k$ instead serves as a complementary
evaluation of the model's ability to produce a correct solution across multiple
sampled generations.

Importantly, PAPO remains consistently competitive under this different
decoding protocol. On the per-dataset average, the two PAPO variants occupy the
top two positions for every Pass@$k$ metric: PAPO-GRPO achieves the highest
Pass@1 and Pass@2, while PAPO-PPO achieves the highest Pass@3--5. Specifically,
PAPO-PPO improves average Pass@1 from 50.08 to 59.08 over PPO and reaches
67.90 at Pass@5, while PAPO-GRPO improves Pass@1 from 52.49 to 59.97 over GRPO
and reaches 67.63 at Pass@5. Meanwhile, both variants maintain strong
length-budget compliance, reaching 83.42\% for PAPO-PPO and 89.86\% for
PAPO-GRPO. These results indicate that PAPO's improvements are not specific to
deterministic greedy decoding: its advantage persists under stochastic
sampling, improving the probability of obtaining correct solutions while
retaining favorable length efficiency.

\begin{table*}[!h]
\centering
\caption{
\textbf{Pass@$k$ and length-compliance results on mathematical reasoning benchmarks.}
We report Pass@$k$ (\%) for $k\in\{1,2,3,4,5\}$ and the percentage of responses
within the 8K-token length budget (Len.~$\leq$8K).
The best result for each metric is shown in \textbf{bold}, and the second-best
result is \underline{underlined}.
}
\vspace{2mm}
\label{tab:math_passk}

\scriptsize
\setlength{\tabcolsep}{4.2pt}
\renewcommand{\arraystretch}{1.05}

\resizebox{\textwidth}{!}{%
\begin{tabular}{llcccccc}
\toprule
\textbf{Dataset} & \textbf{Method}
& \textbf{Pass@1}
& \textbf{Pass@2}
& \textbf{Pass@3}
& \textbf{Pass@4}
& \textbf{Pass@5}
& \textbf{Len.~$\leq$8K} \\
\midrule

\multirow{6}{*}{\textbf{AIME24}}
& Base
& \underline{50.67} & 60.00 & 63.00 & 63.33 & 63.33 & 30.00 \\
& PPO
& 41.33 & 48.67 & 53.67 & 57.33 & 60.00 & 84.67 \\
& GRPO
& 44.67 & 55.00 & 59.67 & 63.33 & 66.67 & \underline{90.67} \\
& GDPO
& 46.00 & 53.00 & 57.33 & 60.67 & 63.33 & \textbf{96.00} \\
\cmidrule(lr){2-8}
\rowcolor{gray!10}
\cellcolor{white}
& PAPO-PPO \textit{(Ours)}
& \textbf{63.33}
& \textbf{73.33}
& \underline{76.00}
& \underline{76.67}
& \underline{76.67}
& 68.00 \\
\rowcolor{gray!10}
\cellcolor{white}
& PAPO-GRPO \textit{(Ours)}
& \textbf{63.33}
& \underline{73.00}
& \textbf{77.00}
& \textbf{79.33}
& \textbf{80.00}
& 77.33 \\
\midrule

\multirow{6}{*}{\textbf{AIME25}}
& Base
& 42.00 & 49.67 & 52.33 & 53.33 & 53.33 & 17.33 \\
& PPO
& 32.67 & 42.00 & 48.00 & 52.67 & \underline{56.67} & \underline{92.67} \\
& GRPO
& 35.33 & 44.67 & 50.33 & 54.00 & \underline{56.67} & \textbf{94.00} \\
& GDPO
& 30.67 & 39.00 & 44.00 & 47.33 & 50.00 & \textbf{94.00} \\
\cmidrule(lr){2-8}
\rowcolor{gray!10}
\cellcolor{white}
& PAPO-PPO \textit{(Ours)}
& \underline{48.00}
& \underline{57.33}
& \underline{63.33}
& \underline{67.33}
& \textbf{70.00}
& 56.67 \\
\rowcolor{gray!10}
\cellcolor{white}
& PAPO-GRPO \textit{(Ours)}
& \textbf{52.67}
& \textbf{60.67}
& \textbf{65.33}
& \textbf{68.00}
& \textbf{70.00}
& 76.67 \\
\midrule

\multirow{6}{*}{\textbf{AMC}}
& Base
& 85.00 & 90.25 & 92.75 & 94.00 & \underline{95.00} & 62.00 \\
& PPO
& 83.00 & 91.75 & 94.25 & \underline{95.00} & \underline{95.00} & 96.50 \\
& GRPO
& 85.00 & 91.50 & 93.50 & 94.50 & \underline{95.00} & \textbf{99.50} \\
& GDPO
& \underline{87.50} & 90.50 & 91.50 & 92.00 & 92.50 & \underline{98.50} \\
\cmidrule(lr){2-8}
\rowcolor{gray!10}
\cellcolor{white}
& PAPO-PPO \textit{(Ours)}
& \textbf{91.50}
& \textbf{95.00}
& \textbf{96.25}
& \textbf{97.00}
& \textbf{97.50}
& 91.00 \\
\rowcolor{gray!10}
\cellcolor{white}
& PAPO-GRPO \textit{(Ours)}
& \textbf{91.50}
& \underline{94.50}
& \underline{95.00}
& \underline{95.00}
& \underline{95.00}
& 95.50 \\
\midrule

\multirow{6}{*}{\textbf{MATH-500}}
& Base
& 87.32 & 89.70 & 90.72 & 91.36 & 91.80 & 83.80 \\
& PPO
& 87.32 & 90.92 & 92.00 & 92.48 & 92.80 & 99.00 \\
& GRPO
& 88.16
& \textbf{91.40}
& \textbf{92.46}
& \textbf{93.00}
& \textbf{93.40}
& \underline{99.28} \\
& GDPO
& 88.20
& 91.12
& \underline{92.16}
& \underline{92.68}
& \underline{93.00}
& \textbf{99.60} \\
\cmidrule(lr){2-8}
\rowcolor{gray!10}
\cellcolor{white}
& PAPO-PPO \textit{(Ours)}
& \textbf{89.68}
& \underline{91.34}
& 91.84
& 92.16
& 92.40
& 97.16 \\
\rowcolor{gray!10}
\cellcolor{white}
& PAPO-GRPO \textit{(Ours)}
& \underline{89.48}
& 90.86
& 91.50
& 91.92
& 92.20
& 97.56 \\
\midrule

\multirow{6}{*}{\textbf{Minerva}}
& Base
& \underline{18.75}
& \underline{20.00}
& \underline{20.62}
& \underline{20.83}
& \underline{20.83}
& \underline{94.17} \\
& PPO
& 16.25
& 17.92
& 18.54
& 18.75
& 18.75
& \textbf{100.00} \\
& GRPO
& \textbf{19.17}
& 19.58
& 20.00
& 20.42
& \underline{20.83}
& \textbf{100.00} \\
& GDPO
& 18.33
& \textbf{20.21}
& \underline{20.62}
& \underline{20.83}
& \underline{20.83}
& \textbf{100.00} \\
\cmidrule(lr){2-8}
\rowcolor{gray!10}
\cellcolor{white}
& PAPO-PPO \textit{(Ours)}
& \underline{18.75}
& \textbf{20.21}
& \textbf{21.25}
& \textbf{22.08}
& \textbf{22.92}
& \textbf{100.00} \\
\rowcolor{gray!10}
\cellcolor{white}
& PAPO-GRPO \textit{(Ours)}
& \underline{18.75}
& 19.58
& 20.00
& 20.42
& \underline{20.83}
& \textbf{100.00} \\
\midrule

\multirow{6}{*}{\textbf{Olympiad}}
& Base
& 39.82 & 41.89 & 42.95 & 43.66 & 44.20 & 52.38 \\
& PPO
& 39.88 & 43.79 & 45.39 & 46.34 & 47.02 & \underline{97.71} \\
& GRPO
& 42.59 & 45.39 & 46.50 & 47.14 & 47.62 & \underline{97.71} \\
& GDPO
& 43.01 & 45.49 & 46.50 & 47.14 & 47.62 & \textbf{98.04} \\
\cmidrule(lr){2-8}
\rowcolor{gray!10}
\cellcolor{white}
& PAPO-PPO \textit{(Ours)}
& \underline{43.21}
& \underline{45.86}
& \textbf{46.93}
& \textbf{47.53}
& \textbf{47.92}
& 87.68 \\
\rowcolor{gray!10}
\cellcolor{white}
& PAPO-GRPO \textit{(Ours)}
& \textbf{44.08}
& \textbf{45.91}
& \underline{46.74}
& \underline{47.29}
& \underline{47.77}
& 92.08 \\
\midrule


\multirow{6}{*}{\textbf{Avg.}}
& Base
& 53.93 & 58.58 & 60.40 & 61.09 & 61.42 & 56.61 \\
& PPO
& 50.08 & 55.84 & 58.64 & 60.43 & 61.71 & 95.09 \\
& GRPO
& 52.49 & 57.92 & 60.41 & 62.07 & 63.36 & \underline{96.86} \\
& GDPO
& 52.28 & 56.55 & 58.69 & 60.11 & 61.21 & \textbf{97.69} \\
\cmidrule(lr){2-8}
\rowcolor{gray!10}
\cellcolor{white}
& PAPO-PPO \textit{(Ours)}
& \underline{59.08}
& \underline{63.85}
& \textbf{65.93}
& \textbf{67.13}
& \textbf{67.90}
& 83.42 \\
\rowcolor{gray!10}
\cellcolor{white}
& PAPO-GRPO \textit{(Ours)}
& \textbf{59.97}
& \textbf{64.09}
& \underline{65.93}
& \underline{66.99}
& \underline{67.63}
& 89.86 \\

\bottomrule
\end{tabular}%
}
\end{table*}

\section{Training and Implementation Details}
\label{app:training_details}

We provide the training configurations used in our mathematical reasoning and
agentic tool-use experiments. Within each optimization family, all methods use
the same data, rollout configuration, training budget, and optimization
hyperparameters unless otherwise specified. PAPO only changes how the
objective-specific advantages are aggregated and, for PPO, the corresponding
critic parameterization. All experiments are conducted on a single node with
8 GPUs.

\subsection{Mathematical Reasoning}

\paragraph{GRPO-based methods.}
For the GRPO-based mathematical reasoning experiments
(\textsc{GRPO}, \textsc{GDPO}, and \textsc{PAPO-GRPO}), we train on DAPO-17K
and evaluate on the mathematical reasoning benchmarks described in the main
paper. The maximum prompt and response lengths are 1,024 and 12,000 tokens,
respectively. The length-reward threshold is fixed to 8,000 response tokens.
For each prompt, we sample $G=4$ responses. We use a global training batch size
of 512, a policy mini-batch size of 64, and a micro-batch size of 8 per GPU.
The actor learning rate is $1\times10^{-5}$, and all models are trained for
3 epochs. We use asymmetric PPO clipping with lower and upper clipping
coefficients of $0.20$ and $0.28$, respectively. The policy objective includes
an MSE-style KL regularization term with coefficient $5\times10^{-4}$, while
KL is not incorporated into the environmental reward. For standard GRPO, the scalar training reward is the sum of correctness and
length rewards with equal weights;
For GDPO, we follow the implementations in ~\cite{liu2026gdpo} to make correctness and length retained as separate reward dimensions,
independently normalized within each rollout group. For PAPO-GRPO, correctness is
treated as the primary objective and length as the auxiliary objective. Unless
otherwise specified, we use an auxiliary coefficient $\alpha=1.0$. Rollout
groups without effective reward variation are filtered during GRPO-based
training. Other detailed hyperparameters are listed in Table~\ref{tab:math_training_hparams}.

\paragraph{PPO-based methods.}
The PPO and PAPO-PPO experiments use the same training data,
sequence-length limits, batch sizes, learning rate, clipping range, KL
regularization, and 3-epoch training budget as the GRPO-based experiments.
The main difference is that PPO uses one rollout per prompt ($G=1$) and
introduces a learned critic. The critic learning rate is $1\times10^{-5}$
with no critic warm-up. PPO uses a single value head trained on the scalar return associated
with$R = R_{\mathrm{acc}} + R_{\mathrm{len}}.$
PAPO-PPO instead uses the dual-head critic introduced in Section~\ref{sec:method}: one
value head estimates the correctness return, and the other estimates the length
return. The two critic losses are equally weighted, which only determines the relative contribution of the
two value-function losses and is independent of the auxiliary-advantage
coefficient used by the actor. We use $\alpha=1.0$ for PAPO-PPO unless
otherwise specified. Other detailed hyperparameters are listed in Table~\ref{tab:math_training_hparams}.

\begin{table}[h]
\centering
\caption{Training hyperparameters for mathematical reasoning experiments.
Settings shared across methods are held fixed within each optimization family.}
\vspace{1mm}
\label{tab:math_training_hparams}
\small
\begin{tabular}{lcc}
\toprule
\textbf{Hyperparameter} & \textbf{GRPO-based} & \textbf{PPO-based} \\
\midrule
Training batch size & 512 & 512 \\
Policy mini-batch size & 64 & 64 \\
Micro-batch / GPU & 8 & 8 \\
Rollouts / prompt & 4 & 1 \\
Maximum prompt length & 1,024 & 1,024 \\
Maximum response length & 12,000 & 12,000 \\
Length-reward threshold & 8,000 & 8,000 \\
Actor learning rate & $1\times10^{-5}$ & $1\times10^{-5}$ \\
Critic learning rate & -- & $1\times10^{-5}$ \\
Training epochs & 3 & 3 \\
PPO clipping range & $[0.20, 0.28]$ & $[0.20, 0.28]$ \\
Actor KL coefficient & $5\times10^{-4}$ & $5\times10^{-4}$ \\
KL type & MSE & MSE \\
PAPO auxiliary coefficient $\alpha$ & 1.0 & 1.0 \\
\bottomrule
\end{tabular}
\end{table}

\subsection{Agentic Tool Use}

\paragraph{GRPO-based methods.}
For the When2Tool experiments, the training and evaluation splits contain
1,080 and 2,700 instances, respectively, combining both single-hop and
multi-hop tasks. The maximum prompt length is 4,096 tokens and the maximum
generated response length is 2,048 tokens.
We sample $G=8$ trajectories per prompt as group rollouts. We use
a global training batch size of 256, a policy mini-batch size of 32, and a
micro-batch size of 8 per GPU. The actor learning rate is
$1\times10^{-5}$, and models are trained for 20 epochs. We do not apply either
reward-level KL penalties or actor-level KL regularization in the GRPO-based
tool-use experiments.
During training, the agent is allowed at most 5 assistant turns and 4 user/environment turns,
with a maximum tool-response length of 2,048 tokens. We allow at most one
parallel tool call and disable explicit thinking mode during rollout.
As described in Section~\ref{sec:exp_tool}, answer correctness is the primary reward and
tool-use efficiency is the auxiliary reward. Standard GRPO optimizes the
scalar reward
$R = R_{\mathrm{pri}} + R_{\mathrm{aux}}.$
GDPO retains the two reward dimensions separately and combines their normalized
advantages with equal weights. For PAPO-GRPO, we use correctness as the
primary objective and tool efficiency as the auxiliary objective with
$\alpha=1$. The tool-budget offset in the efficiency reward is fixed to
$\delta=1$. Other detailed hyperparameters are listed in Table~\ref{tab:tool_training_hparams}.

\paragraph{PPO-based methods.}
PPO-based tool-use experiments use the same dataset, batch configuration,
sequence lengths, multi-turn interaction limits, and 20-epoch training budget
as the GRPO-based experiments. PPO samples one trajectory per prompt and uses
an actor and learned critic with learning rates of $1\times10^{-5}$ for both.
We use asymmetric PPO clipping with lower and upper coefficients of $0.20$ and
$0.28$, respectively.
Standard PPO optimizes the scalar combination $R = R_{\mathrm{pri}} + R_{\mathrm{aux}},$
whereas PAPO-PPO maintains the two reward dimensions separately and applies
PAPO to their corresponding advantages. PAPO-PPO uses the dual-head critic,
with one value head for answer correctness and one for tool efficiency. We set
the auxiliary coefficient to $\alpha=0.3$ for PAPO-PPO. As in the GRPO-based
setting, the tool-budget offset is $\delta=1$. Other detailed hyperparameters are listed in Table~\ref{tab:tool_training_hparams}.

\begin{table}[h]
\centering
\caption{Training hyperparameters for the When2Tool experiments.}
\vspace{1mm}
\label{tab:tool_training_hparams}
\small
\begin{tabular}{lcc}
\toprule
\textbf{Hyperparameter} & \textbf{GRPO-based} & \textbf{PPO-based} \\
\midrule
Training batch size & 256 & 256 \\
Policy mini-batch size & 32 & 32 \\
Micro-batch / GPU & 8 & 8 \\
Rollouts / prompt & 8 & 1 \\
Maximum prompt length & 4,096 & 4,096 \\
Maximum response length & 2,048 & 2,048 \\
Maximum assistant turns & 5 & 5 \\
Maximum environment turns & 4 & 4 \\
Maximum tool-response length & 2,048 & 2,048 \\
Maximum parallel tool calls & 1 & 1 \\
Actor learning rate & $1\times10^{-5}$ & $1\times10^{-5}$ \\
Critic learning rate & -- & $1\times10^{-5}$ \\
Training epochs & 20 & 20 \\
PPO clipping range & -- & $[0.20,0.28]$ \\
KL regularization & None & None \\
Tool-budget offset $\delta$ & 1.0 & 1.0 \\
PAPO auxiliary coefficient $\alpha$ & 1 & 0.3 \\
\bottomrule
\end{tabular}
\end{table}

\subsection{Safety Alignment}
\label{app:safety_alignment_details}

We evaluate PAPO in multimodal safety alignment following the
experimental setup of Safe RLHF-V~\citep{ji2025saferlhfv}. We use
BeaverTails-V as the preference dataset and retain its decoupled preference
modeling setup, where a reward model (RM-V) provides the helpfulness signal
and a cost model (CM-V) provides the safety signal. Following the original
implementation, the preference models are initialized from LLaVA-1.5-7B, and
we use PPO with GAE for policy optimization. We also retain the reference-policy
KL regularization and the PTX objective based on
Align-Anything-TI2T-Instruction-100K, so that the comparison isolates the
effect of multi-objective advantage aggregation.
To instantiate PAPO, we treat \emph{safety} as the primary objective and
\emph{helpfulness} as the auxiliary objective. Since the original CM-V assigns
a cost to unsafe behavior, we convert it into a maximization-oriented safety
signal by negating the cost. Specifically, for each token $t$, we independently
compute the safety and helpfulness advantages using GAE,
denoted by $A_{t,\mathrm{pri}}$ and $A_{t,\mathrm{aux}}$, respectively.
These two advantages are estimated with separate value heads and normalized
independently before aggregation. We then apply PAPO token-wise by Eq.~\ref{eq:papo_ppo}
The fused advantages are subsequently normalized following the same PPO
pipeline used in our main experiments and are used in the clipped actor
objective.
All remaining RL regularization components are kept unchanged to
enable a controlled comparison. Unless otherwise specified, we follow the original Safe RLHF-V training setup
and official implementation for all remaining optimization hyperparameters,
including the PPO configuration, KL regularization, PTX objective, batch
configuration, and training schedule. Thus, the primary modification is the
replacement of the original Lagrangian reward--cost combination with PAPO's
priority-aware advantage aggregation.

\subsection{Dataset Details and Statistics}
\label{app:dataset_details}

We provide additional details on the training and evaluation datasets used
throughout our experiments. Beyond reporting dataset sizes, we summarize the
task composition, difficulty structure, and evaluation scope of each domain.
The three settings differ substantially in both modality and objective
semantics: mathematical reasoning evaluates correctness under a generation
budget, agentic tool use evaluates task success under interaction cost, and
multimodal safety alignment evaluates safety jointly with helpfulness.


\paragraph{Competition-level Mathematical Reasoning.}
We use DAPO-Math-17K~\citep{yu2026dapo} as the training set for the
mathematical reasoning experiments. DAPO-Math-17K contains approximately
17K competition-style mathematical problems paired with verifiable reference
answers and was originally constructed for reinforcement-learning post-training.
The problems span common competition-mathematics domains such as algebra,
geometry, number theory, and combinatorics. Its deterministic answer format
makes it suitable for outcome-based RL, as correctness can be evaluated without
an external judge model.
We evaluate on six held-out benchmarks that cover substantially different
difficulty regimes and sources. AIME 2024 and AIME 2025 contain 30 problems
each from the corresponding American Invitational Mathematics Examinations and
primarily test difficult high-school competition mathematics. AMC23 contains
40 problems from the 2023 American Mathematics Competitions and represents a
relatively easier competition-level regime. MATH-500 is a curated 500-problem
subset of MATH~\citep{hendrycks2021measuring}, spanning algebra, geometry,
number theory, counting and probability, intermediate algebra, and
precalculus. Minerva Math~\citep{lewkowycz2022solving} contains 272
quantitative problems drawn primarily from university-level STEM material,
including subjects such as differential equations, physics, astronomy, and
chemistry. Finally, we use the \texttt{OE\_TO\_maths\_en\_COMP} subset of
OlympiadBench~\citep{he2024olympiadbench}, containing 674 English,
text-only, open-ended competition problems drawn from Olympiad-level
mathematics. Table~\ref{tab:math_dataset_details} summarizes the resulting training and
evaluation suite. All six evaluation sets are used only for evaluation and do
not contribute to training.

\begin{table*}[t]
\centering
\caption{
\textbf{Datasets used for mathematical reasoning.}
DAPO-Math-17K is used for policy optimization, while the remaining datasets
are held out for evaluation. The evaluation suite spans standard competition
mathematics, Olympiad-level problems, and university-level quantitative
reasoning.
}
\vspace{1mm}
\label{tab:math_dataset_details}
\small
\setlength{\tabcolsep}{5pt}
\renewcommand{\arraystretch}{1.10}
\begin{tabular}{lccp{7.3cm}}
\toprule
\textbf{Dataset}
& \textbf{\# Examples}
& \textbf{Role}
& \textbf{Scope} \\
\midrule

DAPO-Math-17K
& $\sim$17K
& Train
& Competition-style mathematics with verifiable reference answers; covers
  algebra, geometry, number theory, combinatorics, and related topics. \\

\midrule

AMC23
& 40
& Test
& 2023 AMC competition problems; comparatively short high-school
  competition mathematics across algebra, geometry, number theory, and
  combinatorics. \\

AIME 2024
& 30
& Test
& AIME I and II 2024; difficult competition problems with integer final
  answers. \\

AIME 2025
& 30
& Test
& AIME I and II 2025; difficult competition problems with integer final
  answers. \\

MATH-500
& 500
& Test
& Curated subset of MATH covering algebra, geometry, number theory,
  counting/probability, intermediate algebra, and precalculus. \\

Minerva Math
& 272
& Test
& University-level quantitative reasoning from STEM subjects, including
  mathematics, physics, astronomy, and chemistry. \\

OlympiadBench
& 674
& Test
& English text-only open-ended competition mathematics
  (\texttt{OE\_TO\_maths\_en\_COMP}), emphasizing Olympiad-level reasoning. \\

\bottomrule
\end{tabular}
\vspace{-2mm}
\end{table*}


\paragraph{Agentic Tool Use.}
We use \textsc{When2Tool}~\citep{sun2026when2tool} to study multi-reward
optimization when the auxiliary objective directly controls interaction cost.
Unlike conventional tool-use benchmarks in which every example is constructed
to require a tool, \textsc{When2Tool} explicitly varies whether external tool
use is necessary. It contains 18 environments: 15 single-hop environments and
three multi-hop environments. The benchmark organizes tool necessity around
three forms of model self-assessment:
\emph{computational scale}, which asks whether the model can reliably perform
the required computation internally;
\emph{knowledge boundaries}, which asks whether the required information is
available in the model's parametric knowledge; and
\emph{execution reliability}, which asks whether the model can accurately
execute a procedural operation without external assistance.
Each category contains five single-hop environments. Computational-scale tasks
include arithmetic, statistics, combinatorial counting, matrix operations, and
primality testing. Knowledge-boundary tasks cover retrieval, historical facts,
game rules, hashing, and decoding. Execution-reliability tasks require
manipulating lists, performing date calculations, executing code, resolving
schedules, or matching regular expressions. In addition, one three-hop
environment is constructed for each category by chaining calculator, retriever,
or code-execution operations.
Importantly, the easy, medium, and hard splits are not merely generic notions
of problem difficulty. They are designed to move the problem across a
\emph{tool-necessity boundary}. Easy examples are often directly solvable by
the language model, medium examples lie near the boundary where tool use becomes
beneficial, and hard examples deliberately increase computational scale,
introduce unavailable information, or require sufficiently complex execution
that tool use becomes substantially more important.

Table~\ref{tab:when2tool_environments} provides the full environment taxonomy
and illustrates how difficulty changes within each environment.
\begin{table*}[t]
\centering
\caption{
\textbf{Environment composition of \textsc{When2Tool}.}
Each single-hop environment contains easy, medium, and hard instances.
The last column summarizes how each environment moves from tasks that are
often solvable internally to tasks for which tool assistance becomes
increasingly useful or necessary. The three multi-hop environments contain
three dependent operations.
}
\vspace{1mm}
\label{tab:when2tool_environments}

\scriptsize
\setlength{\tabcolsep}{3.5pt}
\renewcommand{\arraystretch}{1.08}

\begin{tabularx}{\textwidth}{
@{}
p{2.0cm}
p{2.0cm}
p{3.0cm}
Y
@{}
}
\toprule
\textbf{Category}
& \textbf{Environment}
& \textbf{Capability}
& \textbf{Easy $\rightarrow$ Hard progression} \\
\midrule

\multirow{5}{*}{\makecell[l]{Computational\\Scale}}
& Calculator
& Arithmetic evaluation
& Small integer arithmetic $\rightarrow$ large multi-operation or
  trillion-scale expressions. \\

& Statistics
& Statistical computation
& Mean/median of short lists $\rightarrow$ variance, standard deviation,
  or correlation over longer lists. \\

& Counting
& Combinatorial counting
& Small combinations such as $\binom{6}{4}$ $\rightarrow$
  large combinatorial quantities such as $\binom{80}{40}$. \\

& Matrix
& Matrix computation
& Small $2\times2$ operations $\rightarrow$ larger-matrix determinants
  and other matrix operations. \\

& Prime
& Primality testing
& Small familiar integers $\rightarrow$ large integers whose primality
  is difficult to determine reliably without computation. \\

\midrule

\multirow{5}{*}{\makecell[l]{Knowledge\\Boundaries}}
& Retriever
& Information retrieval
& Common factual knowledge $\rightarrow$ obscure or synthetic entities
  requiring retrieval from the provided corpus. \\

& Historical Year
& Historical fact lookup
& Well-known event dates $\rightarrow$ obscure or synthetic event years
  unavailable from parametric knowledge. \\

& Game Rule
& Rule/statistic lookup
& Familiar game rules $\rightarrow$ rules or statistics of unfamiliar
  or synthetic games. \\

& Hash
& String hashing
& Familiar hash functions such as MD5 $\rightarrow$ custom or previously
  unseen hashing procedures. \\

& Decoding
& String decoding
& Familiar encodings such as Morse code $\rightarrow$ custom ciphers
  defined externally. \\

\midrule

\multirow{5}{*}{\makecell[l]{Execution\\Reliability}}
& List Manipulation
& Sequential operations
& A few operations on short lists $\rightarrow$ longer or nested list
  transformations. \\

& DateTime
& Date computation
& Simple calendar operations $\rightarrow$ date arithmetic across
  months, years, and leap years. \\

& Code Executor
& Program execution
& Simple expressions $\rightarrow$ loops, recursion, or dynamic-programming
  programs requiring extended execution. \\

& Schedule
& Constraint scheduling
& Simple overlap checks $\rightarrow$ finding feasible intervals among
  multiple interacting events. \\

& Regex Match
& Regex execution
& Simple patterns on short strings $\rightarrow$ complex regular
  expressions over longer inputs. \\

\midrule

\multirow{3}{*}{Multi-hop}
& Chained Calculator
& Dependent computation
& Three-step arithmetic chains with progressively larger and more complex
  intermediate values. \\

& Chained Retriever
& Dependent retrieval
& Multi-step retrieval where one retrieved fact determines the next query. \\

& Chained Code Executor
& Dependent execution
& Three sequential programs in which each output becomes the input to the
  next computation. \\

\bottomrule
\end{tabularx}

\vspace{-2mm}
\end{table*}
This controlled construction is particularly useful for our setting because
minimizing tool calls is not uniformly desirable. On easy instances, avoiding
a redundant call can improve efficiency without affecting correctness, whereas
on hard instances an additional call may be essential for successful task
completion.


\paragraph{Multimodal Safety Alignment.}
For multimodal safety alignment, we follow the setting of
SafeRLHF-V~\citep{ji2025saferlhfv} and use BeaverTails-V for alignment.
BeaverTails-V extends preference-based safety supervision to vision-language
models and contains paired multimodal prompts and responses with separate
annotations for \emph{safety} and \emph{helpfulness}. Each record contains
a textual question, an associated image, an image-severity annotation, two
candidate model responses, the identity of the more helpful response, binary
safety labels for both responses, the safer-response preference, and
fine-grained harmlessness and helpfulness scores. The released data are divided
into 20 fine-grained safety categories organized under nine broader safety
domains, summarized in Table~\ref{tab:beavertails_taxonomy}. The released BeaverTails-V data contain approximately 30K preference examples
for training, while the evaluation portion contains roughly 30 examples per
fine-grained category (approximately 590 in total, with small category-level
variation). Candidate responses were generated using multiple vision-language
models, yielding response pairs that can differ along both safety and
helpfulness dimensions. The dataset therefore naturally exposes the
non-equivalence of the two objectives: a response can become safer through
over-refusal while simultaneously becoming less useful, motivating our choice
of safety as the primary objective and helpfulness as the auxiliary objective. For out-of-distribution safety evaluation, we additionally use
MM-SafetyBench~\citep{liu2024mm}. MM-SafetyBench contains
5,040 text--image pairs across 13 safety-critical scenarios. In contrast to
BeaverTails-V, which provides preference supervision for alignment,
MM-SafetyBench is explicitly designed to evaluate whether multimodal models
remain safe when potentially harmful intent is conveyed through the
interaction between text and images. Its scenarios span illegal activity,
hate speech, malware generation, physical harm, economic harm, fraud,
sexual content, political lobbying, privacy violations, legal advice,
financial advice, health consultation, and government decision-making.
The benchmark further includes several image construction strategies,
including generated images, typography-based images, and combined variants,
to test whether harmful instructions can bypass text-only safety mechanisms.

\begin{table*}[t]
\centering
\caption{
\textbf{Safety taxonomy of BeaverTails-V.}
The dataset contains 20 fine-grained categories grouped into nine broader
safety domains, providing diverse supervision for multimodal safety alignment.
}
\vspace{1mm}
\label{tab:beavertails_taxonomy}

\scriptsize
\setlength{\tabcolsep}{6pt}
\renewcommand{\arraystretch}{1.08}

\begin{tabularx}{\textwidth}{@{}p{5.5cm}Y@{}}
\toprule
\textbf{Safety domain} & \textbf{Fine-grained categories} \\
\midrule

National and public security
& Terrorism or extremism; hacking or digital crime. \\

Health and safety risks
& Harmful health content; environmental damage; dangerous behavior. \\

Physical and psychological harm
& Violence and physical harm; psychological harm and manipulation; animal abuse. \\

Privacy and consent violations
& Privacy invasion and surveillance; identity misuse and impersonation. \\

Discrimination and harassment
& Discriminatory depictions; insulting and harassing behavior. \\

Deceptive and unethical behavior
& Deception in personal relationships; financial and academic fraud. \\

False and sensitive information
& False information; sensitive information in key areas. \\

Disturbing content
& Horror and gore; psychological horror and dark themes. \\

\makecell[l]{Sexually explicit or\\ inappropriate content}
& Pornographic content; sexual crimes. \\

\bottomrule
\end{tabularx}

\vspace{-2mm}
\end{table*}

\begin{table*}[t]
\centering
\caption{
\textbf{Datasets used for multimodal safety alignment and evaluation.}
BeaverTails-V provides safety--helpfulness preference supervision, while
MM-SafetyBench provides an independent multimodal safety evaluation.
}
\vspace{1mm}
\label{tab:safety_dataset_details}
\small
\setlength{\tabcolsep}{5pt}
\renewcommand{\arraystretch}{1.08}

\begin{tabular}{lccp{8.0cm}}
\toprule
\textbf{Dataset}
& \textbf{\# Examples}
& \textbf{Role}
& \textbf{Content / Annotation} \\
\midrule

BeaverTails-V
& $\sim$30K
& Train
& Multimodal preference pairs across 20 safety categories; includes separate
  safety and helpfulness preferences, binary safety labels, image-severity
  levels, and graded response scores. \\

BeaverTails-V
& $\sim$590
& Test
& Held-out examples distributed across the same 20 fine-grained categories,
  generally 28--30 examples per category. \\

MM-SafetyBench
& 5,040
& Test
& Text--image safety benchmark covering 13 safety-critical scenarios and
  multiple image-based attack constructions. \\

\bottomrule
\end{tabular}
\vspace{-2mm}
\end{table*}

\section{Training and Reward Computation Examples}
\label{app:training_examples}

We provide representative examples illustrating the actual training interface
and reward computation for mathematical reasoning and agentic tool use.
The problem inputs and ground-truth answers are taken from the corresponding
training datasets. The assistant responses
below are valid examples constructed according to the actual training format.

\subsection{Mathematical Reasoning Example}
\label{app:math_example}

We first consider a mathematical reasoning example whose ground-truth answer is
$34$.

\paragraph{Input.}

\begin{systembox}
You are a helpful math assistant. Solve the problem step by step.
Put only the final answer in \texttt{\textbackslash boxed\{\}} at the end of
your response. Do not use an ``Answer:'' line; the final answer must be
extractable from \texttt{\textbackslash boxed\{\}}.
\end{systembox}

\begin{userbox}
In triangle $ABC$, $\sin \angle A = \frac{4}{5}$ and
$\angle A < 90^\circ$. Let $D$ be a point outside triangle $ABC$ such that
$\angle BAD = \angle DAC$ and $\angle BDC = 90^\circ$.
Suppose that $AD = 1$ and that
$\frac{BD}{CD} = \frac{3}{2}$.
If $AB + AC$ can be expressed in the form
$\frac{a\sqrt{b}}{c}$ where $a,b,c$ are pairwise relatively prime integers,
find $a+b+c$.
\end{userbox}

The reference answer is
\[
y^*=34.
\]

\paragraph{Example response.}

\begin{assistantbox}
Since $\sin A=\frac45$ and $A<90^\circ$, we have
\[
\cos A=\frac35.
\]

Because $AD$ bisects $\angle A$, let
\[
\theta=\frac A2.
\]
Then
\[
\cos\theta
=
\sqrt{\frac{1+\cos A}{2}}
=
\sqrt{\frac{1+\frac35}{2}}
=
\frac{2}{\sqrt5},
\qquad
\sin\theta=\frac{1}{\sqrt5}.
\]

Place
\[
A=(0,0),\qquad D=(1,0),
\]
with $AD$ on the $x$-axis, and write
\[
AB=m,\qquad AC=n.
\]
Then
\[
B=
\left(
\frac{2m}{\sqrt5},
\frac{m}{\sqrt5}
\right),
\qquad
C=
\left(
\frac{2n}{\sqrt5},
-\frac{n}{\sqrt5}
\right).
\]

Since $\angle BDC=90^\circ$,
\[
(B-D)\cdot(C-D)=0,
\]
and hence
\[
\left(\frac{2m}{\sqrt5}-1\right)
\left(\frac{2n}{\sqrt5}-1\right)
-\frac{mn}{5}
=
0.
\]
This simplifies to
\[
\frac35mn-\frac{2}{\sqrt5}(m+n)+1=0.
\]

Furthermore,
\[
BD^2=m^2-\frac{4m}{\sqrt5}+1,
\qquad
CD^2=n^2-\frac{4n}{\sqrt5}+1.
\]
Since
\[
\frac{BD}{CD}=\frac32,
\]
we obtain
\[
4\left(
m^2-\frac{4m}{\sqrt5}+1
\right)
=
9\left(
n^2-\frac{4n}{\sqrt5}+1
\right).
\]

Solving the two equations gives
\[
(m,n)
=
\left(
\frac{\sqrt5}{6},
\frac{4\sqrt5}{9}
\right)
\]
or
\[
(m,n)
=
\left(
\frac{7\sqrt5}{6},
\frac{8\sqrt5}{9}
\right).
\]

The second configuration places $D$ inside triangle $ABC$, whereas the problem
requires $D$ to lie outside. Therefore,
\[
AB+AC
=
\frac{\sqrt5}{6}
+
\frac{4\sqrt5}{9}
=
\frac{11\sqrt5}{18}.
\]

Thus,
\[
a=11,\qquad b=5,\qquad c=18,
\]
and
\[
a+b+c=34.
\]

\[
\boxed{34}
\]
\end{assistantbox}

\paragraph{Reward computation.}

The last valid boxed answer is $\boxed{34}$, which matches the ground truth.
The response is also shorter than the $8{,}000$-token length threshold.
Therefore,
\[
R_{\mathrm{acc}}=1,
\qquad
R_{\mathrm{len}}=1.
\]



\subsection{Agentic Tool-Use Examples}
\label{app:tool_examples}

We provide two multi-step tool-use examples that illustrate different forms of
agentic dependency. The first requires iterative code execution, where later
programs must be instantiated from earlier execution results. The second
requires multi-hop retrieval, where later search queries depend on entities
obtained in previous retrieval steps.

\subsubsection{Three-Step Code Execution Example}
\label{app:tool_code_example}

We first consider a hard multi-step code-execution task. The expected
intermediate results are
\[
x=987,\qquad y=549,\qquad z=93,
\]
and the final ground-truth answer is
\[
y^*=93.
\]
The task-specific tool budget is
\[
B=3.
\]

\paragraph{Input.}

\begin{systembox}
You can use the provided tools when helpful, but tool use is optional.
Call at most one tool at a time and use arguments that match its schema.
When you have the answer, end with exactly one final answer in the form
\texttt{\textbackslash boxed\{...\}}.
\end{systembox}

\begin{userbox}
Three-step code execution: Step 1: Run the following code. Call the result x.

\begin{lstlisting}[style=promptcode]
memo = {}
def fib(n):
    if n in memo: return memo[n]
    if n < 2: return n
    memo[n] = fib(n-1) + fib(n-2)
    return memo[n]
print(fib(16))
\end{lstlisting}

Step 2: Using x, run: compute sum of all proper divisors of x
(excluding x itself). Call the result y.

Step 3: Using y, run: compute min coins for amount=y with
coins=[1, 4, 6]. Call the result z.

Return only z.
\end{userbox}

The instance exposes a single code-execution tool:

\begin{toolbox}
\begin{lstlisting}[style=promptcode]
{
  "type": "function",
  "function": {
    "name": "run_code",
    "description":
      "Execute a Python code snippet and return its stdout output.",
    "parameters": {
      "type": "object",
      "properties": {
        "code": {
          "type": "string",
          "description": "Python code to execute."
        }
      },
      "required": ["code"]
    }
  }
}
\end{lstlisting}
\end{toolbox}

\paragraph{Step 1: computing the Fibonacci value.}

The first program is directly provided by the user, so the agent executes it:

\begin{assistantbox}
\begin{lstlisting}[style=promptcode]
<tool_call>
{"name":"run_code",
 "arguments":{
   "code":"memo = {}\ndef fib(n):\n    if n in memo: return memo[n]\n    if n < 2: return n\n    memo[n] = fib(n-1) + fib(n-2)\n    return memo[n]\nprint(fib(16))"
 }}
</tool_call>
\end{lstlisting}
\end{assistantbox}

The execution environment returns:

\begin{toolbox}
\begin{lstlisting}[style=promptcode]
{
  "success": true,
  "stdout": "987",
  "stderr": "",
  "exit_code": 0,
  "result": "987"
}
\end{lstlisting}
\end{toolbox}

The agent therefore obtains
\[
x=987.
\]

\paragraph{Step 2: summing the proper divisors.}

The second program must be constructed using the execution result from the
previous step:

\begin{assistantbox}
\begin{lstlisting}[style=promptcode]
<tool_call>
{"name":"run_code",
 "arguments":{
   "code":"x = 987\ns = 0\nfor d in range(1, x):\n    if x % d == 0:\n        s += d\nprint(s)"
 }}
</tool_call>
\end{lstlisting}
\end{assistantbox}

The tool returns:

\begin{toolbox}
\begin{lstlisting}[style=promptcode]
{
  "success": true,
  "stdout": "549",
  "stderr": "",
  "exit_code": 0,
  "result": "549"
}
\end{lstlisting}
\end{toolbox}

Hence,
\[
y=549.
\]

For reference,
\[
987=3\times7\times47,
\]
whose proper divisors are
\[
1,3,7,21,47,141,329,
\]
and therefore
\[
1+3+7+21+47+141+329=549.
\]

\paragraph{Step 3: computing the minimum number of coins.}

The final program is dynamically instantiated using $y=549$:

\begin{assistantbox}
\begin{lstlisting}[style=promptcode]
<tool_call>
{"name":"run_code",
 "arguments":{
   "code":"y = 549\ncoins = [1, 4, 6]\ndp = [float(\"inf\")] * (y + 1)\ndp[0] = 0\nfor amount in range(1, y + 1):\n    for coin in coins:\n        if coin <= amount:\n            dp[amount] = min(dp[amount], dp[amount - coin] + 1)\nprint(dp[y])"
 }}
</tool_call>
\end{lstlisting}
\end{assistantbox}

The execution environment returns:

\begin{toolbox}
\begin{lstlisting}[style=promptcode]
{
  "success": true,
  "stdout": "93",
  "stderr": "",
  "exit_code": 0,
  "result": "93"
}
\end{lstlisting}
\end{toolbox}

The agent therefore obtains
\[
z=93.
\]

Since the user requests only the final value, the assistant terminates with:

\begin{assistantbox}
\[
\boxed{93}
\]
\end{assistantbox}

\paragraph{Reward computation.}

The final boxed answer matches the ground truth, so
\[
R_{\mathrm{pri}}=1.
\]

The trajectory executes three tool calls:
\[
N_{\mathrm{call}}=3.
\]
For this multi-step instance,
\[
B=3,
\qquad
\delta=1.
\]
Therefore,
\[
R_{\mathrm{aux}}
=
\max\left(
0,\,
1-\frac{N_{\mathrm{call}}}{B+\delta}
\right)
=
1-\frac{3}{3+1}
=
0.25.
\]

The raw reward vector is
\[
(R_{\mathrm{pri}},R_{\mathrm{aux}})
=
(1,0.25).
\]

This example illustrates a genuinely agentic dependency structure. Although
the first program is given directly in the prompt, the second program cannot
be instantiated until the first execution produces $x=987$, and the third
cannot be instantiated until the second produces $y=549$. The agent must
therefore maintain intermediate execution state and dynamically construct later
tool arguments from earlier tool observations.

\subsubsection{Three-Hop Retrieval Example}
\label{app:tool_multihop_example}

We next consider a three-hop retrieval task from
\texttt{RetrieverEnv}. The task requires each subsequent query to be
constructed from the result of the previous retrieval step.

The dependency chain is
\[
\textit{Moby-Dick}
\rightarrow
\text{Herman Melville}
\rightarrow
\text{United States}
\rightarrow
\text{Washington, D.C.},
\]
with ground-truth answer
\[
y^*=\text{Washington, D.C.}
\]
and tool budget
\[
B=3.
\]

\paragraph{Input.}

\begin{systembox}
You can use the provided tools when helpful, but tool use is optional.
Call at most one tool at a time and use arguments that match its schema.
When you have the answer, end with exactly one final answer in the form
\texttt{\textbackslash boxed\{...\}}.
\end{systembox}

\begin{userbox}
Three-step retrieval: Step 1: Who wrote Moby-Dick? Call the answer x.
Step 2: In which country was x born? Call the answer y.
Step 3: What is the capital of y? Call the answer z.
Return only z.
\end{userbox}

The instance exposes two local retrieval tools,
\texttt{search\_corpus} and \texttt{read\_doc}.

\begin{toolbox}
\begin{lstlisting}[style=promptcode]
[
  {
    "type": "function",
    "function": {
      "name": "search_corpus",
      "description":
        "Search local corpus and return metadata-only hits
         (id/title/score). Use read_doc to access full text.",
      "parameters": {
        "type": "object",
        "properties": {
          "query": {
            "type": "string",
            "description": "Search query text."
          },
          "top_k": {
            "type": "integer",
            "description":
              "Maximum number of hits to return."
          }
        },
        "required": ["query"]
      }
    }
  },
  {
    "type": "function",
    "function": {
      "name": "read_doc",
      "description": "Read a full document by id.",
      "parameters": {
        "type": "object",
        "properties": {
          "doc_id": {
            "type": "string",
            "description":
              "Document id from search results."
          }
        },
        "required": ["doc_id"]
      }
    }
  }
]
\end{lstlisting}
\end{toolbox}

\paragraph{Step 1: identifying the author.}

The agent first searches for \textit{Moby-Dick}:

\begin{assistantbox}
\begin{lstlisting}[style=promptcode]
<tool_call>
{"name":"search_corpus",
 "arguments":{"query":"Moby-Dick","top_k":1}}
</tool_call>
\end{lstlisting}
\end{assistantbox}

\begin{toolbox}
\begin{lstlisting}[style=promptcode]
{
  "success": true,
  "data": {
    "query": "Moby-Dick",
    "total_hits": 10,
    "returned": 1,
    "hits": [
      {
        "id": "doc1",
        "title": "Moby-Dick",
        "snippet":
          "Moby-Dick is an 1851 novel by Herman Melville
           about Captain Ahab's quest to hunt the white whale.",
        "score": 1
      }
    ]
  }
}
\end{lstlisting}
\end{toolbox}

From this result, the agent obtains
\[
x=\text{Herman Melville}.
\]

\paragraph{Step 2: identifying the birth country.}

The second query is generated from the entity obtained in Step 1:

\begin{assistantbox}
\begin{lstlisting}[style=promptcode]
<tool_call>
{"name":"search_corpus",
 "arguments":{
   "query":"Herman Melville biography",
   "top_k":1
 }}
</tool_call>
\end{lstlisting}
\end{assistantbox}

\begin{toolbox}
\begin{lstlisting}[style=promptcode]
{
  "success": true,
  "data": {
    "query": "Herman Melville biography",
    "total_hits": 10,
    "returned": 1,
    "hits": [
      {
        "id": "doc10",
        "title": "Herman Melville biography",
        "snippet":
          "Herman Melville was born in New York City,
           United States, in 1819.",
        "score": 3
      }
    ]
  }
}
\end{lstlisting}
\end{toolbox}

The agent therefore obtains
\[
y=\text{United States}.
\]

\paragraph{Step 3: identifying the capital.}

The final query is constructed from the country obtained in Step 2:

\begin{assistantbox}
\begin{lstlisting}[style=promptcode]
<tool_call>
{"name":"search_corpus",
 "arguments":{
   "query":"United States capital",
   "top_k":1
 }}
</tool_call>
\end{lstlisting}
\end{assistantbox}

\begin{toolbox}
\begin{lstlisting}[style=promptcode]
{
  "success": true,
  "data": {
    "query": "United States capital",
    "total_hits": 10,
    "returned": 1,
    "hits": [
      {
        "id": "doc7",
        "title": "United States capital",
        "snippet":
          "The capital of the United States is
           Washington, D.C., established in 1790
           as the seat of the federal government.",
        "score": 3
      }
    ]
  }
}
\end{lstlisting}
\end{toolbox}

Thus,
\[
z=\text{Washington, D.C.}
\]

Since the user requests only the final entity, the assistant terminates with:

\begin{assistantbox}
\[
\boxed{\text{Washington, D.C.}}
\]
\end{assistantbox}

\paragraph{Reward computation.}

The final boxed answer matches the reference answer, so
\[
R_{\mathrm{pri}}=1.
\]

The trajectory contains three tool calls:
\[
N_{\mathrm{call}}=3.
\]
For this multi-step instance,
\[
B=3,
\qquad
\delta=1.
\]
Therefore,
\[
R_{\mathrm{aux}}
=
\max\left(
0,\,
1-\frac{N_{\mathrm{call}}}{B+\delta}
\right)
=
1-\frac{3}{3+1}
=
0.25.
\]

The raw reward vector is
\[
(R_{\mathrm{pri}},R_{\mathrm{aux}})
=
(1,0.25).
\]

\paragraph{Multi-hop dependency.}

This task requires multi-turn state propagation. The second query
cannot be instantiated until the first retrieval identifies Herman Melville,
and the third query cannot be instantiated until the second retrieval
identifies the United States. The agent must therefore maintain the
intermediate state
$x\rightarrow y\rightarrow z,$
dynamically construct later tool arguments from earlier observations, and
respect the one-tool-call-per-turn constraint. The final response, meanwhile,
must expose only the requested terminal entity rather than the intermediate
retrieval process.

Together, the two tool-use examples illustrate complementary forms of
long-horizon agentic behavior. The code-execution example requires dynamically
constructing executable programs from intermediate numerical states, whereas
the retrieval example requires dynamically constructing semantic queries from
intermediate entities. In both cases, task correctness serves as the primary
objective, while tool-use efficiency provides auxiliary supervision over how
the correct result is obtained.

\end{document}